\documentclass[english]{pfia}

\usepackage{xcolor}
\definecolor{afiablue}{RGB}{61,159,207}
\definecolor{afiared}{RGB}{167,75,68}
\definecolor{afialightblue}{RGB}{158,193,232}

\newcommand{\field}[1]{\textcolor{afiablue}{#1}}
\newcommand{\val}[1]{\textsf{\textcolor{afialightblue}{#1}}}

\usepackage{amsmath}

\title{Fairness of Classifiers in the Presence of Constraints between Features\footnote{preprint version
of paper accepted at \emph{CP 2026}}}

\author{Martin C. Cooper\fup{1}, Imane Bousdira\fup{2}
\\[6pt]
\fup{1} IRIT, University of Toulouse \\
\fup{2} IRIT, Toulouse INP} 

\date{Firstname.LastName@irit.fr}

\usepackage{graphicx} % Required for inserting images
\usepackage{epic}
\usepackage{color}

\usepackage{amssymb}
\usepackage{amsthm}
\usepackage{caption}
\usepackage{fontenc}
\usepackage{epic}
\usepackage{float}
\usepackage{verbatim}

\newtheorem{prop}{Proposition}
\newtheorem{proposition}[prop]{Proposition}
\newtheorem{defn}{Definition}
\newtheorem{definition}[defn]{Definition}
\newtheorem{cor}{Corollary}
\newtheorem{corollary}[cor]{Corollary}
\newtheorem{exmp}{Example}
\newtheorem{example}[exmp]{Example}

\newtheorem{thm}{Theorem}
\newtheorem{theorem}[thm]{Theorem}

\newcommand{\mbb}[1]{\ensuremath\mathbb{#1}}

\begin{document}

\maketitle

\begin{abstract}
In Machine Learning, an accepted definition of fairness 
of a decision taken by a classifier 
is that it should not depend on protected features, such as gender. 
Unfortunately, when constraints exist between features,
such dependencies can be obscured by the constraints.

To avoid this problem, we propose that a decision 
be considered fair if it has a fair explanation.
We define a fair explanation as a  
prime-implicant reason for the 
decision that does not contain any protected feature
(where the constraints are taken into account in
the definition of prime-implicant). 
Surprisingly, ignoring constraints can completely change the fairness of
a decision (according to this definition) even in the absence
of constraints between protected and unprotected features.

Three possible definitions of fairness of a classifier
are that for all its decisions 
(1) there are  only fair explanations,
(2) there is at least one fair explanation, or
(3) changing protected features does not
change the outcome.
We identify the relationships between these different
definitions of fairness and study the computational complexity of
testing fairness of classifiers. 
\end{abstract}

\begin{keywords}
Fairness, constraints, abductive explanation, computational complexity.
\end{keywords}

\section{Introduction}

Fairness has gained heightened attention in recent years, driven by the widespread use of artificial intelligence across diverse domains. Concerns about fairness have become more prominent as AI systems increasingly take on roles in high-stakes areas such as hiring decisions \cite{SchumannFMD20}. Moreover, the principle of fairness has been widely emphasized in the field of trustworthy AI \cite{10.1145/3491209}, for example, in the ethics guidelines for trustworthy AI established by the European Union \cite{EUEthicsfortrustworthyAI}. 

Given the importance of fairness, it has been addressed in several studies and numerous definitions have been proposed \cite{Binns18, SaxenaHDRPL19}. One common definition describes it, in the context of decision-making, as the absence of any favoritism or discrimination toward an individual or a group on the basis of their inherent or acquired characteristics \cite{10.1145/3457607}. More specific definitions appear in the literature. 
Verma and Rubin~\cite{VermaR18} group them into three categories: statistical measures \cite{doi:10.1177/0049124118782533, Chouldechova17, Corbett-DaviesP17, DworkHPRZ12, HardtPNS16, KleinbergMR17}, similarity-based measures \cite{DworkHPRZ12, GalhotraBM17, KusnerLRS17} and causal reasoning \cite{KilbertusRPHJS17, KusnerLRS17, NabiS18} which is based on causal graphs representing relationships between features. The statistical notion of fairness seems to be insufficient, as it depends on the data 
 and largely ignores all features of the classified subject except the sensitive ones, which might hide unfairness \cite{VermaR18}. The second category overcomes such issues by not marginalizing over non-sensitive features.

From a different perspective, fairness can be interpreted along two dimensions: distributive fairness and procedural fairness \cite{Grgic-HlacaZGW18, abs-2404-01877}. The former focuses on the outcomes, whereas the latter concerns the fairness of the decision-making process. A formal definition of fairness is an essential ingredient in the construction of ethical decision-making systems. Therefore, we study this notion from a formal standpoint, and our work aligns primarily with the second dimension. Specifically, the fairness of the decision-making process is addressed by studying the presence of fair explanations. In addition, we discuss the well-known definition of Fairness Through Unawareness that falls within the category of similarity-based measures.
In the last-mentioned definition, two instances are considered similar if they differ only on
protected features, and their outcomes should thus be the same.

Our key contribution lies in considering constraints between features when evaluating fairness.
%Our purely logical approach can be viewed as a foundation on which to base
%the investigation of statistics-based fairness criteria.
Constraints may arise from different origins. They can be due to the encoding process: for example, in the one-hot encoding of a categorical feature as a set of boolean features where, by definition, exactly one of these features should be true. Constraints can also have an origin in the real world modeled by the feature space: for example, someone on maternity leave is necessarily a woman and employed. 

We assume that decisions are taken by a classifier as a function of feature-values and that some features, such as gender or race, are considered protected for ethical or legal reasons. It is reasonable to consider that a classifier is fair if none of its decisions depend on the protected features. However, in the presence of constraints between features, this simple definition needs to be refined. In fact, it is clearly incorrect to ignore constraints when assessing fairness. This is illustrated by the following example that involves logical constraints between protected boolean features in a one-hot encoding.

\begin{example} \label{ex0}
Suppose that the function $\kappa(m,f,g)$ returns 1 if an employee is eligible
for a bonus as a function of the boolean features: $m$ (male), $f$ (female),
$g$ (goals achieved). For simplicity, we assume that all employees are
male or female, but not both. Both $m$ and $f$ are protected features.
There is a unique constraint that $m \equiv \neg f$. Machine learning
may produce the model $\kappa(m,f,g) \equiv (m \land g) \lor (f \land g)$.
Without taking into account the constraint, this model could be
deemed unfair since $\kappa(0,0,1) \neq \kappa(1,0,1)$ (two feature-vectors
which differ only in the feature $m$ have different outcomes). 
However, in the constrained feature-space,
$\kappa(m,f,g) \equiv g$ and hence $\kappa$ is clearly fair.
\end{example}

In the absence of constraints, a proof of the unfairness of a
decision $\kappa(x)=c$ is a witness $y$ such that 
$\kappa(y) \neq c$ and instances $x$ and $y$ differ only on the
protected features. In the presence of constraints, the existence
of such a witness is no longer guaranteed, as illustrated by the
following example.

\begin{example} \label{ex0a}
Suppose that $\kappa(f,\ell)$ returns 1 if an employee is eligible
for a bonus as a function of the features: $f$ (female),
$\ell$ (has taken a 3-month parental leave). Only $f$
is a protected feature. However, we assume that
legislation only allows women to take parental leave of 3 months,
so there is a constraint $\ell \rightarrow f$.
%Suppose that $\kappa \equiv \neg f$, which is clearly unfair
%(to everyone). 
Suppose that $\kappa(x)=0$ where $x=(1,1)$ corresponding to 
a women who has taken 3 months parental leave.
There is no instance $y$ differing with $x$
only on the protected feature $f$ and which satisfies the constraint. 
Hence there is no possible witness to prove unfairness
of this decision.
\end{example}

We show in Section~\ref{sec:3} how the problems raised by 
these two examples can be solved by defining fairness 
of decisions in terms of explanations.

The rest of the paper is organized as follows. In Section~\ref{sec:2}, we introduce general notation and concepts that we use throughout the paper. In Section~\ref{sec:3}, we define the fairness of a particular decision in the presence of constraints
(based on the existence of a fair explanation), followed by a study in Section~\ref{sec:4} on how ignoring constraints affects the notion of fairness. The fairness of classifiers as a whole is presented in Section~\ref{sec:5} where we introduce three possible definitions and Section~\ref{sec:6} analyses the relationships between them. Section~\ref{sec:7} is devoted to complexity results. We address in Section~\ref{sec:causality} the orthogonal question of the choice of which features to protect. A discussion is presented in Section~\ref{sec:disc} and Section~\ref{sec:conc} concludes.

\section{Preliminaries} \label{sec:2}

We suppose that ${\mathcal F}$ is a set of $n$ features and that
${\mathcal K}$ is a set of (at least two) classes.
We assume that the set of features ${\mathcal F}$ is partitioned into
a set ${\mathcal P}$ of protected features (such as gender or race)
and a set ${\mathcal N}$ of unprotected features.
We denote by $\mbb{F}$ the feature space, the cartesian product
of the domains of the $n$ features. 
Not all feature-vectors in $\mbb{F}$ may be possible, due to a set of
constraints denoted by ${\mathcal C}$. Constraints may be a
consequence of the encoding or real-world restrictions.
We denote by $\mbb{F}[{\mathcal C}]$ the set of instances in $\mbb{F}$
which satisfy the constraints. We assume throughout the paper that 
$\kappa : \mbb{F}[{\mathcal C}] \rightarrow {\mathcal K}$ is a classifier.

For notational simplicity we associate each feature with a number from 1 to $n$. 
%If ${\mathcal S} = \{i_1,\ldots,i_s\} \subseteq \{1,\ldots,n\} = {\mathcal F}$ 
%is a subset of features and $x = (x_1,\ldots,x_n) \in \mbb{F}[{\mathcal C}]$
%is an instance, we use the notation $x_{\mathcal S}$ to denote the 
%projection of $x \in \mbb{F}$ onto the subset of features ${\mathcal S}$,
%i.e. $x_{\mathcal S} = (x_{i_1},\ldots,x_{i_s})$.
We can view an instance $x = (x_1,\ldots,x_n)$ %\in \mathbb{F}$ 
as the set of literals
$\{(i,x_i) : i \in {\mathcal F}\}$. If ${\mathcal S} \subseteq {\mathcal F}$,
and $x \in \mathbb{F}$, we use the notation $x_{\mathcal S}$ to denote
the partial assignment $\{(i,x_i) : i \in {\mathcal S}\}$ to features in ${\mathcal S}$.

In the context of eXplainable Artificial Intelligence (XAI),
a formal notion of explanation of a decision has 
emerged~\cite{AmgoudB22,AudemardBBKLM21,BarceloM0S20,BassanAK24,Joao22,OrdyniakPRS24,ShihCD18}.
It is sometimes known as sufficient reason, %or prime-implicant explanation,
but we will use the term abductive explanation (AXp).
\begin{definition}  \label{def:AXp}
A \emph{weak AXp} of a decision 
associated with a classifier-instance pair $\langle \kappa, x \rangle$
is a subset $S$ of the features such that 
$\forall y \in \mbb{F}[{\mathcal C}]$, $y_S=x_S \rightarrow \kappa(y)=\kappa(x)$.
An \emph{AXp} is a subset-minimal weak AXp.
\end{definition}
In the definition of an AXp, the constraints ${\mathcal C}$ are taken into account by only considering 
instances $y$ (agreeing with $x$ on $S$) that satisfy 
the constraints~\cite{AudemardLMS24a,CooperM21,GorjiR22,YuISN023}.

Depending on the context and for simplicity of presentation,
it may be convenient to consider the partial assignment $x_S$, 
rather than the set $S$, as the abductive explanation.
We will also use the notion of the \emph{coverage} of
an explanation $S$, the set of instances in $\mbb{F}[{\mathcal C}]$
that agree with $x$ on $S$.

\section{Fair Decisions in the Presence of Constraints} \label{sec:3}

In this section we consider the fairness of a particular decision
taken by a classifier $\kappa$. Given a decision $\kappa(x)=c$ 
in an \emph{unconstrained} feature space $\mbb{F}$, we
% consider an explanation to be any AXp (abductive explanation). We 
can view an AXp as a prime-implicant explanation of the decision:
it is a sufficient reason for the decision that is not
subsumed by any other sufficient reason.
In an unconstrained feature space, $A$ subsumes $B$ iff $A \subseteq B$. 
In the presence of constraints ${\mathcal C}$, subsumption is more subtle:
when explaining a decision $\kappa(x)=c$,
$A$ \emph{subsumes} $B$ iff the partial assignment $x_B$
implies the partial assignment $x_A$ in $\mathbb{F}[{\mathcal C}]$
(i.e. $\forall y \in \mathbb{F}[{\mathcal C}], (y_B=x_B) \rightarrow (y_A=x_A)$).
$A$ \emph{strictly subsumes} $B$ iff $A$ subsumes $B$ but $B$ does not
subsume $A$. We consider that an explanation should not be strictly subsumed by
another explanation. This is equivalent to saying that its coverage is 
not a proper subset of the coverage of any other explanation.
Since we do not want to include any unnecessary features, we also impose
subset minimality. This leads to the following definition~\cite{CooperA23}.

\begin{definition}  \label{def:PI}
A \emph{PI-explanation}
(prime-implicant explanation) of a decision $\kappa(x)=c$ in
a constrained feature space $\mbb{F}[{\mathcal C}]$ is an AXp 
that is not strictly subsumed by another AXp.
\end{definition} 

For example, consider $\kappa(a,b) \equiv a \land b$.
In $\mbb{F}$, there are two PI-explanations of $\kappa(0,0)=0$, namely
$\{a\}$ and $\{b\}$. Now, suppose that ${\mathcal C}$ consists of
the constraint $\neg a \lor b$. Then, in the
constrained feature space $\mbb{F}[{\mathcal C}]$, $a \land b \equiv a$,
so the decision $\kappa(0,0)=0$ has a single
PI-explanation $\{a\}$ in $\mbb{F}[{\mathcal C}]$:
$b=0$ implies $a=0$ in $\mathbb{F}[{\mathcal C}]$, 
but $a=0$ does not imply $b=0$. Thus
$\{a\}$ strictly subsumes $\{b\}$ since $a=0$ covers (explains)
two decisions $\kappa(0,0)=0$ and $\kappa(0,1)=0$ whereas
$b=0$ only covers the decision $\kappa(0,0)=0$ since $(1,0) \notin \mathbb{F}[{\mathcal C}]$.

\begin{definition}
A PI-explanation is \emph{fair} if it does not contain any
protected feature.
We say that a decision is \emph{existentially fair} if it has a fair PI-explanation.
We say that a decision is \emph{universally fair} if all of its PI-explanations are fair.
A decision is \emph{unfair} if it has no fair PI-explanation.
\end{definition}

If we consider Example~\ref{ex0}, all decisions taken by the 
classifier are universally fair since they all have the same unique PI-explanation
$\{g\}$. If we consider Example~\ref{ex0a}, if $\kappa(f,\ell)
= \neg f$, then all decisions have the same PI-explanation $\{f\}$
and hence the decision $\kappa(1,1)=0$ is deemed unfair even though
there is no witness to this unfairness (i.e. no $y$ satisfying the
constraints, differing with $(1,1)$ only on the protected feature $f$
and such that $\kappa(y)=1$).

\section{How Ignoring Constraints Alters Fairness of Decisions} \label{sec:4}

In this section, we study how ignoring constraints can
seriously alter evaluations of fairness.
We examine the case where there are constraints between protected features ${\mathcal P}$ and unprotected features ${\mathcal N}$, as well as the cases where constraints occur only within ${\mathcal P}$ or only within ${\mathcal N}$.

\subsection{Constraints between protected and unprotected features}

We consider two examples in which constraints are present
between the protected features and the
unprotected features. One shows that ignoring
constraints can make a universally fair decision appear
unfair. The second example shows that ignoring constraints
can make an unfair decision appear universally fair.

\begin{example} \label{ex1}
Consider a function $\kappa(e,m)$ that returns 1 if a person
is eligible for a free training course as a function of two
features: `in employment' ($e$) and `on maternity leave' ($m$).
Suppose that $\kappa(e,m) \equiv (\neg e) \land (\neg m)$.
Without taking into account any constraints between the two
features, the only PI-explanation for the decision $\kappa(0,0)=1$
is $\{e,m\}$. If $m$ is a protected feature, then this PI-explanation
is unfair and hence the
decision appears unfair. However, there is a constraint that a person
can only be on maternity leave if they are in employment, so 
$(e,m)=(0,1)$ is impossible. Thus, in the constrained 
feature-space $\mbb{F}[{\mathcal C}]$, $\kappa \equiv \neg e$ and the only 
PI-explanation for $\kappa(0,0)=1$ is $\{e\}$ which is fair.
\end{example}

\begin{example} \label{ex2}
Suppose that an adoption agency must decide whether a child $C$
is suitable for adoption by two potential parents $P_1,P_2$.
To simplify, we suppose that the adoption agency uses only
three features to take a decision: $s_1$ ($C$ and $P_1$ 
are of the same race), $s_2$ ($C$ and $P_2$ are of the same race)
and $s$ ($P_1$ and $P_2$ are of the same race).
We consider a decision to be unfair if it uses the feature $s$,
since this could be a discrimination against mixed-race
couples, but it is legitimate to favour adoption by parents
of the same race as the child.
There is a logical constraint that if two of $s_1, s_2, s$ are 
true, then the third is also true (since in this case all
three persons are of the same race).
The adoption agency uses the rule
$\kappa(s_1,s_2,s) \equiv (s_1 \land s_2) \lor (\neg s_1 \land \neg s_2 \land s)$.
%The agency could argue that if the child is of a different
%race to the two parents, then it is preferable that the
%parents have the same race.
Consider the decision $\kappa(1,1,1)=1$.
The only PI-explanation for this decision without taking into
account the constraints, is $\{s_1,s_2\}$ which is fair.
However, when we take into account the constraints,
we can easily deduce that $\kappa(s_1,s_2,s) \equiv s$
in $\mbb{F}[{\mathcal C}]$;
hence the only PI-explanation of $\kappa(1,1,1)=1$ when taking
into account the constraints is $\{s\}$ which is unfair.
\end{example}

\subsection{Constraints limited to only protected or only unprotected features}

It is perhaps not surprising that fairness can change
when constraints are ignored between protected and unprotected features.
We now consider the cases in which there are constraints
either only between protected features or only
between unprotected features. The following examples show that
ignoring such constraints can give a false impression of unfairness.

\begin{example} \label{ex3}
Let $\kappa(f,p,g)$ be a function to decide whether to give a bonus
to an employee based on three features: $f$ (the employee is female),
$p$ (the employee is pregnant) and $g$ (the employee has achieved
their goals). The features $f$ and $p$ are both protected.
Suppose that $\kappa(f,p,g) \equiv g \land (f \lor \neg p)$
and that there is a single constraint: only women can be pregnant.
This means that in $\mbb{F}[{\mathcal C}]$, $\kappa(f,p,g) \equiv g$
(since $f \lor \neg p$ is always true). Without taking into 
account the constraint, the only PI-explanation of $\kappa(1,1,1)=1$
is $\{f,g\}$ which is unfair since $f$ is a protected feature. 
However, in $\mbb{F}[{\mathcal C}]$ 
the only PI-explanation of $\kappa(1,1,1)=1$ is $\{g\}$ which is fair.
\end{example}

\begin{example} \label{ex4}
Let $\kappa(f,s,e)$ be a function to decide whether
a person should provide a medical certificate,
as a function of three features: $f$ (the person is female),
$s$ (the person is on sick leave), $e$ (the person is employed).
Suppose that $\kappa(f,s,e) \equiv (f \land s) \lor (s \land e)$
where $f$ is the only protected feature and there is a single constraint:
$\neg s \lor e$ (to be on sick leave, one needs to be employed). 
In $\mbb{F}$, the decision $\kappa(1,1,1)=1$
has two PI-explanations, namely $\{f,s\}$ (which is unfair)
and $\{s,e\}$. However, in $\mbb{F}[{\mathcal C}]$
$\kappa(f,s,e) \equiv s$ and there is a single PI-explanation
of $\kappa(1,1,1)=1$, namely $\{s\}$, which is fair.
\end{example}

These examples show that unfair PI-explanations can disappear
when constraints (between protected features, as in Example~\ref{ex3}, 
or between unprotected features, as in Example~\ref{ex4})
are taken into account.
On the other hand, we now show that fair PI-explanations cannot disappear
when taking into account constraints, provided that
there are no constraints between protected and unprotected features. 

\begin{proposition}  \label{prop:noPN}
Suppose that there are no constraints in ${\mathcal C}$
between protected and unprotected features. For any
$x \in \mbb{F}[{\mathcal C}]$, if the decision $\kappa(x)=c$ 
has a fair PI-explanation in $\mbb{F}$, then it also
has a fair PI-explanation in $\mbb{F}[{\mathcal C}]$.
\end{proposition}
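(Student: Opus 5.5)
The plan is to use the hypothesis only to get a fair \emph{weak} AXp in $\mbb{F}[{\mathcal C}]$, and then to choose, among all fair weak AXps, one with maximal coverage. The absence of constraints between ${\mathcal P}$ and ${\mathcal N}$ will then prevent any explanation that contains protected features from strictly subsuming it.

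\textbf{Step 1: product structure.} Since no constraint links ${\mathcal P}$ and ${\mathcal N}$, ${\mathcal C}$ is equivalent to a conjunction ${\mathcal C}_P \land {\mathcal C}_N$ of constraints on ${\mathcal P}$ alone and on ${\mathcal N}$ alone. Hence $\mbb{F}[{\mathcal C}] = \mbb{F}_P[{\mathcal C}_P] \times \mbb{F}_N[{\mathcal C}_N]$. Consequently, for any $T \subseteq {\mathcal N}$ the coverage of $T$ in $\mbb{F}[{\mathcal C}]$ is $\mbb{F}_P[{\mathcal C}_P] \times \{z \in \mbb{F}_N[{\mathcal C}_N] : z_T = x_T\}$. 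In particular, it contains every feasible assignment to the protected features.

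\textbf{Step 2: existence of fair weak AXps.} Let $S \subseteq {\mathcal N}$ be a fair PI-explanation of $\kappa(x)=c$ in $\mbb{F}$. The condition of Definition~\ref{def:AXp} quantifies over $y$. Restricting $y$ to the subset $\mbb{F}[{\mathcal C}] \subseteq \mbb{F}$ only weakens it, so $S$ is a weak AXp in $\mbb{F}[{\mathcal C}]$. The family of fair weak AXps in $\mbb{F}[{\mathcal C}]$ is therefore non-empty and finite. Choose $T \subseteq {\mathcal N}$ in it whose coverage is maximal under inclusion. Then shrink $T$ to a subset-minimal weak AXp $T' \subseteq T$. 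Because $T' \subseteq {\mathcal N}$, it is still fair. Its coverage contains that of $T$, so by maximality the two coverages are equal. Thus $T'$ is a fair AXp whose coverage is maximal among fair weak AXps.

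\textbf{Step 3: no strict subsumption.} This is the main step. Suppose, for contradiction, that some AXp $A$ strictly subsumes $T'$, i.e.\ $\mathrm{cov}(A) \supsetneq \mathrm{cov}(T')$.
\begin{itemize}
\item By Step 1, $\mathrm{cov}(T')$ contains instances realising every element of $\mbb{F}_P[{\mathcal C}_P]$. Every such instance lies in $\mathrm{cov}(A)$, so it agrees with $x$ on $A \cap {\mathcal P}$.
\item Hence every feasible protected assignment agrees with $x$ on $A\cap{\mathcal P}$: these protected features are forced to constant values by ${\mathcal C}_P$.
\item Therefore $\mathrm{cov}(A \cap {\mathcal N}) = \mathrm{cov}(A)$, so $A \cap {\mathcal N}$ is a weak AXp. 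It is fair, and its coverage strictly contains $\mathrm{cov}(T')$.
\end{itemize}
This contradicts the maximality of $\mathrm{cov}(T')$ among fair weak AXps. So $T'$ is an AXp not strictly subsumed by any AXp, i.e.\ a fair PI-explanation in $\mbb{F}[{\mathcal C}]$, as required.

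The delicate point is the first bullet of Step 3: it is exactly where the absence of cross constraints is used, through the product decomposition of Step 1. Without that decomposition, the coverage of a fair explanation need not range over all protected values, and the argument would break down, consistent with Example~\ref{ex2}.
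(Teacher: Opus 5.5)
Your proof is correct. It shares its key observation with the paper's proof: if an explanation $A$ subsumes a fair weak AXp, then, because $\mbb{F}[{\mathcal C}]$ is a product, every feasible protected assignment agrees with $x$ on $A\cap{\mathcal P}$. Hence $A$ and $A\cap{\mathcal N}$ have the same coverage.

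Where you differ is in the extremal principle used to turn this into a contradiction.
\begin{itemize}
\item The paper works with ${\mathcal N}$ itself as the fair weak AXp. It asserts that the only obstruction is an unfair PI-explanation $T$ subsuming ${\mathcal N}$. It then contradicts the subset-minimality of $T$, since $T\setminus{\mathcal P}$ would be a smaller weak AXp.
\item You fix a fair AXp $T'$ whose coverage is maximal among fair weak AXps, and contradict that maximality via $A\cap{\mathcal N}$. You never use the minimality of $A$, nor that it is a PI-explanation; only that it is a weak AXp.
\end{itemize}

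Your route makes explicit a finiteness argument that the paper leaves implicit. The paper's sentence ``the only way that there can be no fair PI-explanation in $\mbb{F}[{\mathcal C}]$ is if ${\mathcal N}$ is subsumed by an unfair PI-explanation'' tacitly needs the fact that any AXp which is not a PI-explanation is strictly subsumed by some PI-explanation. The paper's route, in return, gives a slightly stronger structural by-product: every AXp that subsumes ${\mathcal N}$ is contained in ${\mathcal N}$, hence is fair.
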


\begin{proof}
If the decision $\kappa(x)=c$ has a fair PI-explanation in $\mbb{F}$,
then the set of unprotected features ${\mathcal N}$ 
is a weak AXp in $\mathbb{F}$.
Let $\mbb{P}$ be the set of all assignments to the features ${\mathcal P}$,
and let $\mbb{P}[{\mathcal C}]$ denote the set of assignments in $\mbb{P}$ 
satisfying the constraints between the protected features.
By definition of a weak AXp in $\mbb{F}$, 
\[
\forall v \in \mbb{P}, \ \kappa(v \cdot x_{\mathcal N}) = c
\]
where $v \cdot x_{\mathcal N}$ is the feature-vector that assigns $v$
to the features of ${\mathcal P}$ and $x_{\mathcal N}$ to
the features of ${\mathcal N}$. Since $\mbb{P}[{\mathcal C}] \subseteq \mbb{P}$,
it immediately follows that
\[
\forall v \in \mbb{P}[{\mathcal C}], \ \kappa(v \cdot x_{\mathcal N}) = c
\]
and hence ${\mathcal N}$ is a weak AXp in $\mbb{F}[{\mathcal C}]$
by Definition~\ref{def:AXp}.
The only way that there can be no
fair PI-explanation in $\mbb{F}[{\mathcal C}]$ is if this weak AXp ${\mathcal N}$
is subsumed by an unfair PI-explanation $T$. 

Suppose, for a contradiction, that $T$ is an unfair PI-explanation that subsumes 
${\mathcal N}$. $T$ is therefore an AXp and $T \cap {\mathcal P} \neq \emptyset$. 
To subsume ${\mathcal N}$, $T$ must cover all assignments of the form
$v \cdot x_{\mathcal N}$ for all $v \in \mbb{P}[{\mathcal C}]$.
This implies that $v_{T'} = x_{T'}$ (where $T' = T \cap {\mathcal P}$) 
for all $v \in \mbb{P}[{\mathcal C}]$.
Since there are no constraints between ${\mathcal P}$ and ${\mathcal N}$,
the coverage of $T$ is then necessarily equal to the coverage of
$T \setminus {\mathcal P}$, contradicting subset-minimality of the AXp $T$.
This contradiction shows that there is necessarily a fair PI-explanation
of $\kappa(x)=c$ in $\mbb{F}[{\mathcal C}]$.
\end{proof}

Example~\ref{ex3} showed that fair PI-explanations can appear
when taking into account constraints, even when these constraints exist
uniquely between protected features.
We now show that the existence of a fair PI-explanation is independent of
constraints between the unprotected features.

\begin{proposition} \label{prop:notP}
Suppose that the only constraints in ${\mathcal C}$ are
between unprotected features.
The existence of a fair PI-explanation of a decision $\kappa(x)=c$,
where $x \in \mathbb{F}[{\mathcal C}]$,
is independent of the constraints ${\mathcal C}$.
\end{proposition}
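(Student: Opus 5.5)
The plan is to prove the two directions of the equivalence separately: a fair PI-explanation exists in $\mbb{F}$ if and only if one exists in $\mbb{F}[{\mathcal C}]$. Throughout, as in Proposition~\ref{prop:noPN}, I assume that $\kappa$ is defined on the whole of $\mbb{F}$ whenever we speak of explanations in $\mbb{F}$.

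\emph{From $\mbb{F}$ to $\mbb{F}[{\mathcal C}]$.} If all constraints are between unprotected features, then in particular there are no constraints between ${\mathcal P}$ and ${\mathcal N}$. So Proposition~\ref{prop:noPN} applies directly: a fair PI-explanation in $\mbb{F}$ yields one in $\mbb{F}[{\mathcal C}]$.

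\emph{From $\mbb{F}[{\mathcal C}]$ to $\mbb{F}$.} Let $S \subseteq {\mathcal N}$ be a fair PI-explanation of $\kappa(x)=c$ in $\mbb{F}[{\mathcal C}]$.
\begin{enumerate}
\item Since $S \subseteq {\mathcal N}$, the superset ${\mathcal N}$ is a weak AXp in $\mbb{F}[{\mathcal C}]$. That is, every $y \in \mbb{F}[{\mathcal C}]$ with $y_{\mathcal N}=x_{\mathcal N}$ satisfies $\kappa(y)=c$.
\item Every $y \in \mbb{F}$ with $y_{\mathcal N}=x_{\mathcal N}$ lies in $\mbb{F}[{\mathcal C}]$. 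This is because the constraints mention only features of ${\mathcal N}$, on which $y$ agrees with $x \in \mbb{F}[{\mathcal C}]$. Hence ${\mathcal N}$ is also a weak AXp in $\mbb{F}$.
\item Remove features from ${\mathcal N}$ one at a time while the weak AXp property in $\mbb{F}$ is preserved. This produces a subset-minimal weak AXp $S' \subseteq {\mathcal N}$, that is, an AXp in $\mbb{F}$ containing no protected feature.
\item $S'$ is a PI-explanation in $\mbb{F}$. In the unconstrained space, $A$ subsumes $B$ iff $A \subseteq B$. So strict subsumption of $S'$ by another AXp $A$ would mean $A \subsetneq S'$, contradicting the subset-minimality of $S'$. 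Hence every AXp in $\mbb{F}$ is a PI-explanation, and $S'$ is a fair PI-explanation in $\mbb{F}$.
\end{enumerate}

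The only delicate point is step 2. It is exactly where the hypothesis that constraints involve only unprotected features is used: varying the protected features never leaves $\mbb{F}[{\mathcal C}]$. The rest is routine. One subtlety to note is that the fair PI-explanation obtained in $\mbb{F}$ need not equal $S$, only exist. The same is true in the first direction, which is all the statement requires.
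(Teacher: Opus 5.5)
Your proof is correct and follows essentially the paper's argument: both hinge on ${\mathcal N}$ being a weak AXp, and on the fact that, since ${\mathcal P}$ is unconstrained, every $y \in \mbb{F}$ with $y_{\mathcal N}=x_{\mathcal N}$ already lies in $\mbb{F}[{\mathcal C}]$. The paper packages this as a single characterization (a fair PI-explanation exists iff ${\mathcal N}$ is a weak AXp, in either space), whereas you split it into two directions, citing Proposition~\ref{prop:noPN} for one and using the simpler fact that every AXp in $\mbb{F}$ is a PI-explanation for the other, which is a harmless reorganization.
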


\begin{proof}
We assume that the only constraints are between unprotected features.
There is a fair PI-explanation iff ${\mathcal N}$ is a
weak AXp since (1) if ${\mathcal N}$ is a weak AXp,
it cannot be subsumed by an AXp $T$ that contains
protected features (by the same argument as in the proof of Proposition~\ref{prop:noPN})
and (2) if some subset of ${\mathcal N}$ is a PI-explanation
then it is an AXp, and so
${\mathcal N}$ is necessarily a weak AXp. 

By definition, ${\mathcal N}$ is a weak AXp in $\mbb{F}$ iff 
\begin{equation} \label{eq1}
\forall v \in \mbb{P}, \ \kappa(v \cdot x_{\mathcal N}) = c
\end{equation}
Since there are no constraints on ${\mathcal P}$, 
$\mbb{P}[{\mathcal C}] = \mbb{P}$, where $\mbb{P}[{\mathcal C}]$ and $\mbb{P}$
are as defined in the proof of Proposition~\ref{prop:noPN}. It follows
that ${\mathcal N}$ is a weak AXp in $\mbb{F}[{\mathcal C}]$ 
iff this same condition (\ref{eq1}) is satisfied. 
Hence, the existence of
a fair PI-explanation is independent of the constraints ${\mathcal C}$.
\end{proof}

Propositions~\ref{prop:noPN} and \ref{prop:notP} concern the existence
of fair PI-explanations. We now turn our attention
to the existence of unfair PI-explanations. 
The following example shows that taking into
account constraints between unprotected features
can make an unfair PI-explanation appear.

\begin{example} \label{ex5} 
Consider $\kappa(m,n) \equiv (n=1) \lor (m \land (n=0))$
which is a function of two features: $m$ (the person is male) and
$n$ their number of spouses (where $n \in \mbb{N}$). 
The protected feature is $m$ and there is a constraint
on the unprotected feature $n$, namely $n \leq 1$.
The decision $\kappa(1,1)=1$ has a single PI-explanation
in $\mbb{F}$, namely $\{n\}$ which is fair. However,
$\kappa(m,n) \equiv (n=1) \lor m$ in $\mbb{F}[{\mathcal C}]$, 
and so has two PI-explanations in $\mbb{F}[{\mathcal C}]$, namely
$\{m\}$ (which is unfair) and $\{n\}$.
\end{example}

We have seen in Example~\ref{ex5} that a decision may have an unfair
PI-explanation in $\mbb{F}[{\mathcal C}]$ but not in $\mbb{F}$.
However, when there are constraints only on protected features
(rather than only on unprotected features), this is no longer possible.

\begin{proposition} \label{propunfair}
If the constraints are only on protected features and there
is an unfair PI-explanation in $\mbb{F}[{\mathcal C}]$,
then there is an unfair PI-explanation in $\mbb{F}$.
\end{proposition}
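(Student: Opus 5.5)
The plan is to build an unfair AXp in $\mbb{F}$ directly from the unfair PI-explanation in $\mbb{F}[{\mathcal C}]$. Then I use the fact that in the unconstrained space $\mbb{F}$ subsumption is just set inclusion, so every AXp in $\mbb{F}$ is automatically a PI-explanation there. As in the statement, I regard $\kappa$ as defined on all of $\mbb{F}$ when speaking of explanations in $\mbb{F}$.

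First step: let $T$ be an unfair PI-explanation of $\kappa(x)=c$ in $\mbb{F}[{\mathcal C}]$. Then $T$ is an AXp in $\mbb{F}[{\mathcal C}]$ and $T \cap {\mathcal P} \neq \emptyset$. Consider $S = T \cup {\mathcal P}$. I claim $S$ is a weak AXp in $\mbb{F}$. Take any $y \in \mbb{F}$ with $y_S = x_S$. Since $y_{\mathcal P} = x_{\mathcal P}$, $x \in \mbb{F}[{\mathcal C}]$, and all constraints involve only protected features, $y$ satisfies ${\mathcal C}$, so $y \in \mbb{F}[{\mathcal C}]$. Since also $y_T = x_T$ and $T$ is a weak AXp in $\mbb{F}[{\mathcal C}]$, we get $\kappa(y)=c$. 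This is the key point where the hypothesis on constraints is used: fixing all protected features forces feasibility. I expect this to be the only subtle step.

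Second step: choose a subset-minimal weak AXp $S' \subseteq S$ in $\mbb{F}$, i.e. an AXp in $\mbb{F}$. I must show $S'$ contains a protected feature. Suppose instead $S' \subseteq {\mathcal N}$. Then $S' \subseteq S \cap {\mathcal N} = T \cap {\mathcal N}$. Since $S'$ is a weak AXp in $\mbb{F}$ and $\mbb{F}[{\mathcal C}] \subseteq \mbb{F}$, $S'$ is also a weak AXp in $\mbb{F}[{\mathcal C}]$. But $S' \subseteq T \cap {\mathcal N} \subsetneq T$, because $T$ contains a protected feature. This contradicts the subset-minimality of the AXp $T$ in $\mbb{F}[{\mathcal C}]$. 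Hence $S' \cap {\mathcal P} \neq \emptyset$.

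Finally, in $\mbb{F}$ with no constraints, $A$ subsumes $B$ iff $A \subseteq B$. An AXp strictly subsumed by another AXp would therefore strictly contain it, contradicting subset-minimality. So every AXp in $\mbb{F}$ is a PI-explanation, and $S'$ is an unfair PI-explanation of $\kappa(x)=c$ in $\mbb{F}$.
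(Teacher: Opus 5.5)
Your proof is correct and follows essentially the same route as the paper. Both arguments enlarge the unfair PI-explanation $T$ to $T \cup {\mathcal P}$, use the fact that fixing all protected features forces membership in $\mathbb{F}[{\mathcal C}]$ to get a weak AXp in $\mathbb{F}$, and then observe that any fair weak AXp of $\mathbb{F}$ inside $T \cup {\mathcal P}$ would lie in $T \cap {\mathcal N} \subsetneq T$, contradicting the subset-minimality of $T$. The only difference is presentation: you argue directly by extracting an AXp $S' \subseteq T \cup {\mathcal P}$ and noting that every AXp in $\mathbb{F}$ is a PI-explanation, whereas the paper argues by contradiction and leaves that existence step implicit.
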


\begin{proof}
Suppose that constraints are only on features in ${\mathcal P}$
and that $T$ is an unfair PI-explanation of the decision
$\kappa(x)=c$. Let $T'=T \cup {\mathcal P}$. $T'$ is
necessarily a weak AXp in $\mbb{F}[{\mathcal C}]$.
Since $T'$ fixes the values of all protected
features to the same values as in $x$, it follows that
$T'$ is also a weak AXp in $\mbb{F}$, since constraints
only concern features in ${\mathcal P}$ whose values are fixed.

For there to be no unfair PI-explanation in $\mbb{F}$, the
weak AXp $T'$ must be strictly subsumed in $\mbb{F}$ by
a fair PI-explanation $S \subseteq {\mathcal N}$. 
Strict subsumption in $\mbb{F}$ is equivalent
to strict inclusion, so we must have $S \subset T'$.
Indeed, since $S \subseteq {\mathcal N}$, 
${\mathcal P} \cap {\mathcal N} = \emptyset$, $T \cap {\mathcal P}  \neq\emptyset$ and
$T'=T \cup {\mathcal P}$, we can deduce that $S \subset T$.
Since $S$ is a weak AXp in $\mbb{F}$, it is also a weak AXp
in $\mbb{F}[{\mathcal C}]$. But then this contradicts
the subset-minimality of PI-explanation $T$.
\end{proof}

\subsection{Summary of the effects of taking into account/ignoring constraints}

\begin{figure}
\thicklines \setlength{\unitlength}{1.3pt}
\newsavebox{\arrowdownP}
\savebox{\arrowdownP}(10,20){
\begin{picture}(10,20)(0,0)
\put(5,0){\line(0,1){20}} \put(5,0){\line(1,1){8}} \put(5,0){\line(-1,1){8}}
\end{picture}
}
\newsavebox{\arrowupP}
\savebox{\arrowupP}(10,20){
\begin{picture}(10,20)(0,0)
\put(5,0){\line(0,1){20}} \put(5,20){\line(1,-1){8}} \put(5,20){\line(-1,-1){8}}
\end{picture}
}
\newsavebox{\arrowdownN}
\savebox{\arrowdownN}(10,20){
\begin{picture}(10,20)(0,0)
\put(5,0){\line(0,1){20}} \put(5,0){\line(1,1){8}} \put(5,0){\line(-1,1){8}}
\put(1,10){\line(2,1){8}}
\end{picture}
}
\newsavebox{\arrowupN}
\savebox{\arrowupN}(10,20){
\begin{picture}(10,20)(0,0)
\put(5,0){\line(0,1){20}} \put(5,20){\line(1,-1){8}} \put(5,20){\line(-1,-1){8}} \put(1,6){\line(2,1){8}}
\end{picture}
}
\begin{picture}(210,145)(35,-5)
\put(50,20){\usebox{\arrowdownN}}
\put(90,20){\usebox{\arrowupP}} 
\put(130,20){\usebox{\arrowdownN}} 
\put(170,20){\usebox{\arrowupN}} 
\put(50,100){\usebox{\arrowdownP}} 
\put(90,100){\usebox{\arrowupN}} 
\put(130,100){\usebox{\arrowdownP}} 
\put(170,100){\usebox{\arrowupP}} 

\put(40,5){\framebox[8.cm]{existence of an unfair PI-explanation in $\mbb{F}[{\mathcal C}]$}} 
\put(40,50){\framebox[8.cm]{existence of an unfair PI-explanation in $\mbb{}{F}$}} 
\put(40,85){\framebox[8.cm]{existence of a fair PI-explanation in $\mbb{F}[{\mathcal C}]$}} 
\put(40,130){\framebox[8.cm]{existence of a fair PI-explanation in $\mbb{F}$}} 
\put(35,110){\makebox(0,0){(a)}} 
\put(35,30){\makebox(0,0){(b)}} 

\put(70,30){\makebox(0,0){${\mathcal P}$}} 
\put(110,30){\makebox(0,0){${\mathcal P}$}} 
\put(150,30){\makebox(0,0){${\mathcal N}$}} 
\put(190,30){\makebox(0,0){${\mathcal N}$}} 
\put(70,110){\makebox(0,0){${\mathcal P}$}} 
\put(110,110){\makebox(0,0){${\mathcal P}$}} 
\put(150,110){\makebox(0,0){${\mathcal N}$}} 
\put(190,110){\makebox(0,0){${\mathcal N}$}} 

\end{picture}
 
\caption{The conservation (or not) of fair/unfair PI-explanations
of $\langle \kappa,x \rangle$ (for $x \in \mathbb{F}[{\mathcal C}]$)
when constraints are taken into account (downward arrows) 
or ignored (upward arrows). The label ${\mathcal P}$
or ${\mathcal N}$ means that constraints are uniquely on 
features in ${\mathcal P}$ or ${\mathcal N}$, respectively.}  \label{fig:trans}
\end{figure}

As a conclusion to this section, we can highlight
the main lesson that we have learnt: it is imperative to take
into account constraints when judging whether decisions are fair.
Examples~\ref{ex1} and \ref{ex2} demonstrate that neither the existence of
fair nor the existence of unfair PI-explanations is conserved by taking into account or ignoring
constraints between protected and unprotected features.
We can also point out some positive results that have
some implications on the complexity of detecting fairness. 
Proposition~\ref{prop:notP} tells us that,
when protected features are not constrained, we can test
the existence of a fair PI-explanation of a decision by ignoring
the constraints.
When the constraints concern only the protected features,
there are conditions which are sufficient but not necessary:
(1) the existence of a fair PI-explanation in $\mbb{F}$ implies
the existence of a fair PI-explanation in $\mbb{F}[{\mathcal C}]$,
and (2) the non-existence of an unfair PI-explanation in $\mbb{F}$ implies 
the non-existence of an unfair PI-explanation in $\mbb{F}[{\mathcal C}]$.
Figure~\ref{fig:trans} summarizes the conservation (or not) of
the existence of a fair/unfair PI-explanation when there are no constraints
between protected and unprotected features. Arrows represent implication. The four (non)-implications in Figure~1(a) follow, respectively, from Proposition~\ref{prop:noPN}, Example~\ref{ex3}, Proposition~\ref{prop:noPN}, Proposition~\ref{prop:notP}.
The four (non)-implications in Figure~1(b) follow, respectively, from Example~\ref{ex3}, Proposition~\ref{propunfair}, Example~\ref{ex4}, Example~\ref{ex5}.

\section{Fairness of Classifiers in the Presence of Constraints} \label{sec:5}

We now study the fairness of classifiers as a whole, rather than the fairness of an individual decision. In this regard, we discuss an existing definition and introduce new ones.

\subsection{Constrained FTU as a measure of fairness}

We recall the definition of Fairness Through Unawareness (FTU), which is a classic rule for judging fairness of classifiers in unconstrained feature spaces~\cite{weller-nipsw16,KusnerLRS17,IgnatievCSHM20}:
\[
\forall x,y \in \mbb{F}, \ x_{\mathcal N} = y_{\mathcal N} \ \rightarrow \ 
\kappa(x) = \kappa(y)
\]

When constraints exist, we cannot directly apply FTU as stated above, not least because 
the values of $\kappa(y)$ can be considered as undefined for $y \notin \mbb{F}[{\mathcal C}]$.
As shown in Example~\ref{ex0}, when constraints are overlooked, 
a fair classifier may be wrongly considered unfair. 
Accordingly, the definition of FTU in $\mbb{F}[{\mathcal C}]$ is as follows.
\begin{definition} \label{def:constrainedFTU}
A classifier $\kappa: \mathbb{F}[{\mathcal C}] \rightarrow {\mathcal K}$ satisfies
FTU in $\mathbb{F}[{\mathcal C}]$ \emph{(constrained FTU)} if 
\[
\forall x,y \in \mbb{F}[{\mathcal C}], \ x_{\mathcal N} = y_{\mathcal N} \ \rightarrow \ 
\kappa(x) = \kappa(y)
\]
\end{definition}

We can view $\kappa$ as a partial function whose domain is
$\mbb{F}[{\mathcal C}]$. We say that a total
function $\hat{\kappa}$ is a \emph{completion} of $\kappa$
iff it is an extension of $\kappa$ when $\kappa$ is
viewed as a partial function,
i.e. $\forall x \in \mbb{F}[{\mathcal C}]$,
$\hat{\kappa}(x) = \kappa(x)$. 
Hence, fairness may alternatively be defined as the existence of a fair completion \emph{(FTU by completion)}. It turns out that this is the same as constrained FTU, 
as we will show in Proposition~\ref{prop:equiv}.

\subsection{Measures of fairness based on PI-explanations} \label{subsec:PIXp}

Following the definitions of a fair decision given in Section~\ref{sec:3}, we can define the fairness of a classifier (based on the fairness of all its decisions).
\begin{definition} \label{def:EUfairness}
A classifier $\kappa: \mathbb{F}[{\mathcal C}] \rightarrow {\mathcal K}$ satisfies
\begin{itemize}
\item \emph{existential fairness} if $\forall x \in \mbb{F}[{\mathcal C}]$, 
$\langle \kappa, x \rangle$ has a PI-explanation which is fair, and
\item \emph{universal fairness} if $\forall x \in \mbb{F}[{\mathcal C}]$, all 
PI-explanations of $\langle \kappa, x \rangle$ are fair.
\end{itemize}
\end{definition}

Along with constrained FTU, we now have three definitions of fairness of a classifier.
We compare these definitions and show that they are not always equivalent in Section~\ref{sec:6}.
The computational complexity of verifying these three notions of fairness is discussed in Section~\ref{sec:7}.

\section{Relationships between Definitions of Fairness of Classifiers} \label{sec:6}

In this section, we study the relationship between different
definitions of fairness of classifiers. We begin by giving an alternative
characterisation of constrained FTU (i.e. FTU in $\mathbb{F}[{\mathcal C}]$).

\begin{proposition} \label{prop:equiv}
A classifier $\kappa$ satisfies FTU in $\mbb{F}[{\mathcal C}]$ iff 
there exists a completion $\hat{\kappa}$ of $\kappa$ that satisfies
FTU in $\mbb{F}$.
\end{proposition}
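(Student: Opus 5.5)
The plan is to prove the two directions separately. The ``if'' direction is immediate, and the ``only if'' direction uses an explicit construction of a completion.

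\emph{($\Leftarrow$)} Suppose $\hat{\kappa}$ is a completion of $\kappa$ satisfying FTU in $\mbb{F}$. Take $x,y \in \mbb{F}[{\mathcal C}]$ with $x_{\mathcal N}=y_{\mathcal N}$. Since $\mbb{F}[{\mathcal C}] \subseteq \mbb{F}$, FTU of $\hat{\kappa}$ gives $\hat{\kappa}(x)=\hat{\kappa}(y)$. Because $\hat{\kappa}$ agrees with $\kappa$ on $\mbb{F}[{\mathcal C}]$, this is exactly $\kappa(x)=\kappa(y)$, which is constrained FTU.

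\emph{($\Rightarrow$)} Suppose $\kappa$ satisfies FTU in $\mbb{F}[{\mathcal C}]$. We define $\hat{\kappa}$ on all of $\mbb{F}$ so that it depends only on the unprotected part of its argument. Fix an arbitrary class $c_0 \in {\mathcal K}$. For $z \in \mbb{F}$, we distinguish two cases.
\begin{itemize}
\item If some $x \in \mbb{F}[{\mathcal C}]$ satisfies $x_{\mathcal N}=z_{\mathcal N}$, set $\hat{\kappa}(z)=\kappa(x)$.
\item Otherwise, set $\hat{\kappa}(z)=c_0$.
\end{itemize}

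The key step is showing that $\hat{\kappa}$ is well defined, i.e.\ that the value in the first case does not depend on the choice of $x$. If $x,x' \in \mbb{F}[{\mathcal C}]$ both satisfy $x_{\mathcal N}=x'_{\mathcal N}=z_{\mathcal N}$, then constrained FTU gives $\kappa(x)=\kappa(x')$. This is the only point where the hypothesis is used.

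Two properties then follow.
\begin{itemize}
\item \emph{$\hat{\kappa}$ is a completion.} For $z \in \mbb{F}[{\mathcal C}]$ we may take $x=z$ in the first case, so $\hat{\kappa}(z)=\kappa(z)$.
\item \emph{$\hat{\kappa}$ satisfies FTU in $\mbb{F}$.} By construction $\hat{\kappa}(z)$ depends only on $z_{\mathcal N}$: both the case split and the value assigned are determined by $z_{\mathcal N}$. Hence $z_{\mathcal N}=w_{\mathcal N}$ implies $\hat{\kappa}(z)=\hat{\kappa}(w)$.
\end{itemize}

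No real obstacle is expected. The only subtlety is the well-definedness check above, together with the default value $c_0$ for unprotected parts $z_{\mathcal N}$ that extend to no constrained instance. Any fixed class works as that default, since ${\mathcal K}$ is non-empty.
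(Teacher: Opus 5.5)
Your proof is correct and follows the paper's argument: the ``if'' direction is immediate by restricting to $\mbb{F}[{\mathcal C}]$. The ``only if'' direction uses the same completion as the paper, copying $\kappa$'s value from any constrained instance with the same unprotected part and using a default class $c_0$ otherwise, with constrained FTU giving well-definedness. You also explicitly check that $\hat{\kappa}$ agrees with $\kappa$ on $\mbb{F}[{\mathcal C}]$, which the paper leaves implicit.
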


\begin{proof}
The `if' direction is trivial, since $\hat{\kappa}$ and $\kappa$
agree on $\mbb{F}[{\mathcal C}]$. Suppose that $\kappa$
satisfies FTU in $\mbb{F}[{\mathcal C}]$. Let $c_0$ denote
an arbitrary class. For all $x \in \mbb{F}$ 
define $\hat{\kappa}(x)$ as follows:
\begin{enumerate}
    \item if there exists $y \in \mbb{F}[{\mathcal C}]$ such that
    $y_{\mathcal N}=x_{\mathcal N}$, then $\hat{\kappa}(x)=\kappa(y)$,
    \item if not, then $\hat{\kappa}(x)=c_0$.
\end{enumerate}
This definition is consistent since $\kappa$ satisfies FTU
in $\mbb{F}[{\mathcal C}]$. By construction, $\hat{\kappa}$
satisfies FTU in $\mbb{F}$.
\end{proof}

\subsection{Constrained FTU and Existential fairness}

Existential fairness of a classifier $\kappa$ implies that ${\mathcal N}$ is a weak AXp and hence that $\kappa$ satisfies FTU in $\mathbb{F}[{\mathcal C}]$. In contrast, 
the following example shows that applying constrained FTU to
pairs of feature-vectors in $\mbb{F}[{\mathcal C}]$
may miss some cases of unfairness. It shows that
FTU in $\mbb{F}[{\mathcal C}]$ does not imply existential fairness.

\begin{example} \label{ex0bis}
Consider $\kappa(a,b,c)$ which is a function of three
boolean features $a,b,c$, where $a$ is the only protected feature.
Suppose that there is the constraint 
$a \equiv (\neg b \land c) \lor (b \land \neg c)$
and that $\kappa(a,b,c)=a$. $\kappa$ trivially satisfies FTU on
$\mbb{F}[{\mathcal C}]$ since, for each assignment to the
set of unprotected features ${\mathcal N} = \{b,c\}$,
there is only one feature-vector in $\mbb{F}[{\mathcal C}]$.
This is despite the fact that the classifier $\kappa$ is
always equal to the value of the protected feature, so
should be considered unfair. For example, $\kappa(1,1,0)=1$
has only one PI-explanation $\{a\}$, since it strictly subsumes $\{b,c\}$. 
Indeed, any decision $\kappa(x)=c$ ($x \in \mbb{F}[{\mathcal C}]$) 
has a single PI-explanation $\{a\}$.
Hence, $\kappa$ is not existentially fair.
\end{example}

We now give a more realistic example of a classifier satisfying constrained FTU but which is not existentially fair since
there are decisions that have no fair PI-explanation.
\begin{example} \label{exFTUnotfair}
Consider a function $\kappa(f,p,b,a)$ used by a company to decide whether its employees can work from home,
according to the values of the following features:
$f$ (the employee is female), $p$ (the employee works part-time),
$b$ (the employee recently took prenatal parental leave),
$a$ (the employee recently took postnatal parental leave).
The protected feature is $f$.
Suppose that 
\[ \kappa(f,p,b,a) \ \equiv \ (b \lor a) \ \land \ (\neg p \lor \neg b \lor \neg a)
\]
i.e. all employees who have recently taken parental leave, except for part-time employees
that took parental leave both before and after the birth.
Furthermore, company policy is that female employees who give birth must take parental
leave both before and after the birth. On the other hand, male part-time employees cannot take
parental leave both before and after the birth.
So ${\mathcal C} = \{f \land a \rightarrow b$, 
$f \land b \rightarrow a$, $\neg(\neg f \land p \land a \land b)\}$.

It is easy to see that $\kappa$ satisfies FTU in  $\mbb{F}[{\mathcal C}]$ since $\kappa$
is defined without reference to the protected feature $f$.
However, due to the constraints ${\mathcal C}$ given above, 
$\kappa$ can be rewritten as a function of all four features: in $\mbb{F}[{\mathcal C}]$ we have
\[ \kappa(f,p,b,a) \ \equiv \ (b \lor a) \land  (\neg p \lor \neg f) 
\]
It can easily be verified that the decision $\kappa(0,1,1,0)=1$
has a single PI-explanation $\{f,b\}$. Similarly, 
the decision $\kappa(1,1,1,1)=0$ has a single PI-explanation, namely $\{f,p\}$.
These PI-explanations are unfair since they include the protected feature $f$. 
\end{example}

\subsection{Existential fairness and Universal fairness}

It is trivial that a classifier satisfying universal fairness is existentially fair.
To see that existential fairness does not imply universal
fairness, we can consider the following example.

\begin{example} \label{ex0ter}
Consider a feature-space composed of two
boolean features $a,b$, in which $a$ is the protected feature.
Suppose that there is a constraint $a \equiv b$ and that 
$\kappa(a,b) \equiv a$. Indeed, due to the constraint, $\kappa(a,b) \equiv b$.
There are two PI-explanations for $\kappa(1,1)=1$,
namely $\{a\}$ and $\{b\}$, and similarly for $\kappa(0,0)=0$.
Thus, each decision has a fair PI-explanation and an unfair PI-explanation.
Hence $\kappa$ is existentially fair but not universally fair.
\end{example}

To give a less contrived example, we can return to the case of decisions taken 
by an adoption agency (as in Example~\ref{ex2}).
\begin{example} \label{exAdoptAgain}
An adoption agency must decide whether a child $C$
is suitable for adoption by two potential parents $P_1$, $P_2$,
based on only three features: $s_1$ ($C$ and $P_1$ 
are of the same race), $s_2$ ($C$ and $P_2$ are of the same race)
and $s$ ($P_1$ and $P_2$ are of the same race).
The feature $s$ is the only protected feature.
There is a logical constraint that if two of $s_1, s_2, s$ are 
true, then the third is also true.
Suppose that the adoption agency uses the rule
$\kappa'(s_1,s_2,s) \equiv (s_1 \land s_2)$. 
Each decision taken by $\kappa'$ has a fair PI-explanation, but all
decisions with $s=0$ also have the unfair PI-explanation $\{s\}$, 
due to the fact that $\kappa'$ allows no mixed-race couples to adopt. 
Hence $\kappa'$ is existentially fair but not universally fair.

It is worth pointing out that there is a non-trivial universally fair solution to this problem,
given by the function $\kappa''(s_1,s_2,s) \equiv (s_1 \lor s_2)$.
\end{example}

\subsection{Absence of constraints between protected and unprotected features}

Examples \ref{ex0bis}, \ref{exFTUnotfair}, \ref{ex0ter} and \ref{exAdoptAgain}
all involved constraints between protected and unprotected features.
We now show that in the absence of such constraints, the
notions of constrained FTU, existential fairness 
and universal fairness are all equivalent.
The following Theorem applies to several
of the examples given above, namely Examples~\ref{ex0}, 
\ref{ex3}, \ref{ex4} and \ref{ex5}. %and \ref{ex:parent}.

\begin{theorem} \label{thm:nocstrs}
When there are no constraints in ${\mathcal C}$ between protected and unprotected features,
the following notions of fairness of a classifier are equivalent:
\begin{enumerate}
    \item universal fairness
    \item existential fairness
    \item constrained FTU (FTU in $\mbb{F}[{\mathcal C}]$)
\end{enumerate}
\end{theorem}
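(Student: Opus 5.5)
We will prove the cycle of implications $1 \Rightarrow 2 \Rightarrow 3 \Rightarrow 1$. The first two are essentially already in the paper. Universal fairness trivially implies existential fairness, since every decision has at least one PI-explanation: the set ${\mathcal F}$ is a weak AXp, and in a finite feature space there is an AXp maximal with respect to coverage. For $2 \Rightarrow 3$, if $\langle \kappa, x\rangle$ has a fair PI-explanation $S \subseteq {\mathcal N}$, then ${\mathcal N} \supseteq S$ is a weak AXp for every $x \in \mbb{F}[{\mathcal C}]$. Applying this to each $x$ and each $y$ with $y_{\mathcal N}=x_{\mathcal N}$ gives $\kappa(y)=\kappa(x)$, which is constrained FTU. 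Neither of these steps needs the hypothesis on the constraints.

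The real work is $3 \Rightarrow 1$. Assume constrained FTU, and suppose for contradiction that some $x \in \mbb{F}[{\mathcal C}]$ has a PI-explanation $T$ with $T' = T\cap{\mathcal P} \neq \emptyset$. Because there are no constraints linking ${\mathcal P}$ and ${\mathcal N}$, we have $\mbb{F}[{\mathcal C}] = \mbb{P}[{\mathcal C}] \times \mbb{N}[{\mathcal C}]$, where $\mbb{N}[{\mathcal C}]$ is defined analogously to $\mbb{P}[{\mathcal C}]$ in the proof of Proposition~\ref{prop:noPN}. Every instance is therefore of the form $v\cdot w$ with $v\in\mbb{P}[{\mathcal C}]$ and $w \in \mbb{N}[{\mathcal C}]$. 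The key claim is that $T\setminus{\mathcal P}$ is already a weak AXp, which contradicts subset-minimality of $T$.

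To prove the claim, take any $y = v\cdot w \in \mbb{F}[{\mathcal C}]$ with $y_{T\setminus{\mathcal P}} = x_{T\setminus{\mathcal P}}$. Consider $z = x_{\mathcal P}\cdot w$. It lies in $\mbb{F}[{\mathcal C}]$ by the product structure, since $x_{\mathcal P}\in\mbb{P}[{\mathcal C}]$. Moreover $z$ agrees with $x$ on $T\cap{\mathcal P}$ (as $z_{\mathcal P}=x_{\mathcal P}$) and on $T\setminus{\mathcal P}$ (as $w$ agrees with $x$ there). Hence $z_T = x_T$, and because $T$ is a weak AXp, $\kappa(z)=\kappa(x)$. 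Now $z$ and $y$ share the same ${\mathcal N}$-part $w$ and both lie in $\mbb{F}[{\mathcal C}]$, so constrained FTU gives $\kappa(y)=\kappa(z)=\kappa(x)$. Thus $T\setminus{\mathcal P}$ is a weak AXp strictly contained in $T$, contradicting the fact that $T$ is an AXp. So every PI-explanation, indeed every AXp, is fair, which is universal fairness.

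The main obstacle is making the product decomposition precise. We must check that ``no constraints between protected and unprotected features'' means exactly that every constraint's scope lies within ${\mathcal P}$ or within ${\mathcal N}$, so that $\mbb{F}[{\mathcal C}]$ is the cartesian product and the swapped instance $z$ is feasible. This substitution step is the only place the hypothesis is used; Examples~\ref{ex0bis} and \ref{ex0ter} show it fails without it. Once it is in place the argument is short. It does not even need the subsumption part of the PI-explanation definition, only that PI-explanations are AXps.
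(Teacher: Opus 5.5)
Your proof is correct and is essentially the paper's: $1\Rightarrow 2$ and $2\Rightarrow 3$ are argued the same way. For $3\Rightarrow 1$ the paper uses the same instance $z$ with $z_{\mathcal P}=x_{\mathcal P}$ and $z_{\mathcal N}=y_{\mathcal N}$ to show directly, rather than by contradiction, that $T\cap{\mathcal N}$ is a weak AXp, so that $T\subseteq{\mathcal N}$ by subset-minimality. One small wording fix: the existence of a PI-explanation needs only that the set of features ${\mathcal F}$ is finite, so that there are finitely many AXps, not that the feature space itself is finite; this matters because domains may be infinite, as in Example~\ref{ex5}.
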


\begin{proof}
$1 \Rightarrow 2$ is trivial (knowing that each instance 
$x \in \mbb{F}[{\mathcal C}]$ has at least one PI-explanation). \\ 
$2 \Rightarrow 3$ follows from the fact that the existence of a  fair 
PI-explanation $S \subseteq {\mathcal N}$ implies that ${\mathcal N}$
is a weak AXp, which in turn implies FTU in $\mbb{F}[{\mathcal C}]$. \\
$3 \Rightarrow 1$: Suppose $\kappa$ satisfies FTU in $\mbb{F}[{\mathcal C}]$.
Let $x \in \mbb{F}[{\mathcal C}]$ and suppose that $T$ is a PI-explanation
of $\langle \kappa,x \rangle$. Let $S = T \cap {\mathcal N}$.
Consider an arbitrary $y \in \mbb{F}[{\mathcal C}]$ such that $y_S = x_S$.
Let $z$ be such that $z_{{\mathcal N}}=y_{{\mathcal N}}$ and 
$z_{{\mathcal P}}=x_{{\mathcal P}}$.
Observe that $z_T = x_T$. We know that $z \in \mbb{F}[{\mathcal C}]$ since
$y$ satisfies the constraints on features in ${\mathcal N}$ and
$x$ satisfies the constraints on features in ${\mathcal P}$.
We have:
\begin{enumerate}
    \item $\kappa(y)=\kappa(z)$ by FTU in $\mbb{F}[{\mathcal C}]$, and
    \item $\kappa(z)=\kappa(x)$ since $T$ is a PI-explanation of $\langle \kappa,x \rangle$
    and $z_T = x_T$.
\end{enumerate}
Hence, $\kappa(y) = \kappa(x)$. Since $y$ was an arbitrary instance in
$\mbb{F}[{\mathcal C}]$ which agrees with $x$ on $S$, we have just shown that
$S \subseteq T$ is a weak AXp. By subset minimality of the PI-explanation $T$
(which is an AXp by Definition~\ref{def:PI}), we can deduce
that $S=T$. Thus $T \subseteq {\mathcal N}$. We can therefore deduce that all PI-explanations
$T$ are fair.
\end{proof}

%%%%%%%%%%%%%%%%%%%%%%%%%%%%%%%%%%%%%%%%%%%%%%%%%%%%%%%%%%%%%%%%%%%%%%%%%%%%%%%%%%%%%%%%%%
\subsection{Loose constraints}

Anticipating Section~\ref{sec:7}, testing constrained FTU is
computationally simpler than testing existential fairness.
We therefore investigate circumstances under
which these two notions are equivalent.
In the following, we identify a specific condition on the set of constraints which guarantees that FTU and the existence of a fair PI-explanation are equivalent. 

\begin{definition}  \label{def:loose}
Consider a set of features ${\mathcal F}$ partitioned into a set of protected features
${\mathcal P}$ and a set of unprotected features ${\mathcal N}$.  
A set of constraints ${\mathcal C}$ is \emph{loose} (with respect to the partition $({\mathcal P}, {\mathcal N})$ of the set of features) at $x \in \mathbb{F}[{\mathcal C}]$ if
for all  $p \in {\mathcal P}$, $x_{\{p\}}$ does not strictly subsume $x_{\mathcal N}$.
The set of constraints ${\mathcal C}$ is \emph{loose} 
if ${\mathcal C}$ is loose at each $x \in \mathbb{F}[{\mathcal C}]$.
\end{definition}

Given a set of constraints $\mathcal{C}$, the search for unfair assignments that strictly subsumes some fair assignments 
can be carried out off-line as a preliminary step, independently of any classifier, providing insight into the possible entailments caused by imposing constraints within the feature space. 
If no such subsumptions exist (i.e. the constraints are loose), constrained FTU guarantees the existence of a fair PI-explanation.

\begin{proposition} \label{prop:COND1FTU-exist}
If the set of constraints ${\mathcal C}$ is loose,
then FTU in $\mbb{F}[{\mathcal C}]$ implies existential fairness. 
\end{proposition}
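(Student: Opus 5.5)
The plan is to combine the observation used in the proof of Theorem~\ref{thm:nocstrs} ($2 \Rightarrow 3$), namely that constrained FTU makes ${\mathcal N}$ a weak AXp, with a maximality argument on coverages. Fix $x \in \mbb{F}[{\mathcal C}]$ and assume $\kappa$ satisfies FTU in $\mbb{F}[{\mathcal C}]$.

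\textbf{Step 1: there is a fair AXp.} By Definition~\ref{def:constrainedFTU}, every $y \in \mbb{F}[{\mathcal C}]$ with $y_{\mathcal N}=x_{\mathcal N}$ satisfies $\kappa(y)=\kappa(x)$. So ${\mathcal N}$ is a weak AXp of $\langle \kappa,x\rangle$. Removing features from ${\mathcal N}$ while preserving the weak-AXp property eventually gives an AXp $S_0 \subseteq {\mathcal N}$. Hence the set of fair AXps is non-empty.

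\textbf{Step 2: choose a coverage-maximal fair AXp.} There are finitely many AXps, and ``subsumes'' is the reverse-inclusion preorder on coverages. I will pick a fair AXp $S$ whose coverage is maximal among fair AXps, meaning no fair AXp strictly subsumes $S$. If $S$ is a PI-explanation we are done. Otherwise some AXp strictly subsumes $S$. Following a chain of strict subsumptions, which must terminate by finiteness and transitivity, yields a PI-explanation $T$ that strictly subsumes $S$. By the choice of $S$, $T$ cannot be fair, so $T$ contains some protected feature $p$.

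\textbf{Step 3: derive a contradiction from looseness.} This is the key step. Since $T$ subsumes $S$, the assignment $x_S$ implies $x_T$ in $\mbb{F}[{\mathcal C}]$, and in particular $x_S$ implies $x_{\{p\}}$. Because $S \subseteq {\mathcal N}$, $x_{\mathcal N}$ implies $x_S$, so $x_{\mathcal N}$ implies $x_{\{p\}}$; that is, $\{p\}$ subsumes ${\mathcal N}$. Looseness at $x$ (Definition~\ref{def:loose}) says this subsumption is not strict, so ${\mathcal N}$ also subsumes $\{p\}$: $x_{\{p\}}$ implies $x_{\mathcal N}$. Writing $\mathrm{cov}(\cdot)$ for coverage, and using $p \in T$ and $S \subseteq {\mathcal N}$, we get
\[
\mathrm{cov}(T) \subseteq \mathrm{cov}(\{p\}) \subseteq \mathrm{cov}({\mathcal N}) \subseteq \mathrm{cov}(S).
\]
So $S$ subsumes $T$, contradicting the strictness of the subsumption of $S$ by $T$. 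Therefore $S$ is a fair PI-explanation, and since $x$ was arbitrary, $\kappa$ is existentially fair.

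The main obstacle is Step 2. A single fair AXp may well be strictly subsumed, so one has to choose $S$ carefully: maximal coverage among fair AXps, not merely subset-minimal. One also has to check that strict subsumption can be pushed up to an actual PI-explanation $T$. Once $T$ is known to contain a protected feature, the looseness condition applies directly to that single feature $p$, and Step 3 is routine.
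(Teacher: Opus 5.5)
Your proof is correct and follows essentially the same route as the paper's: FTU gives a fair AXp, some AXp containing a protected feature $p$ strictly subsumes it, and looseness at $x$ then yields a contradiction; you simply run the last step contrapositively through the coverage chain $\mathrm{cov}(T)\subseteq\mathrm{cov}(\{p\})\subseteq\mathrm{cov}({\mathcal N})\subseteq\mathrm{cov}(S)$. Choosing a coverage-maximal fair AXp makes explicit why the strictly subsuming AXp can be taken to be unfair, which the paper only asserts; you do not need to follow the chain up to a PI-explanation $T$, because by maximality any AXp strictly subsuming $S$ is already unfair.
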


\begin{proof}
Suppose for the sake of contradiction that a classifier $\kappa$ satisfies FTU in $\mbb{F}[{\mathcal C}]$ and does not satisfy existential fairness. We have: 
\begin{enumerate}%[(1)]
    \item $\forall x \in \mbb{F}[{\mathcal C}]$, $\langle \kappa, x \rangle$ has a fair AXp (by FTU in $\mbb{F}[{\mathcal C}]$), and
    \item $\exists y \in \mbb{F}[{\mathcal C}]$, $\langle \kappa, y \rangle$ does not have a fair PI-explanation (since $\kappa$ is not existentially fair).
\end{enumerate}
By (1), $\langle \kappa, y \rangle$ has a fair AXp A (i.e. $A \cap {\mathcal P} = \emptyset $). By (2) there must exist an unfair AXp B (i.e. $B \cap {\mathcal P} \neq \emptyset $) such that $y_{B}$ strictly subsumes $y_{A}$ in order for $A$ not to be a PI-explanation. Let $p$ be a protected feature such that $p \in B \cap {\mathcal P}$. It follows that $y_{\{p\}}$ strictly subsumes $y_{A}$, since  
(i) $\forall z \in \mathbb{F}[{\mathcal C}], (z_A=y_A) \rightarrow (z_B=y_B)$, so
$\forall z \in \mathbb{F}[{\mathcal C}], (z_A=y_A) \rightarrow (z_{\{p\}}=y_{\{p\}})$, 
and (ii) $\exists z \in \mathbb{F}[{\mathcal C}], (z_B=y_B) \land (z_A \neq y_A)$, so
$\exists z \in \mathbb{F}[{\mathcal C}], (z_{\{p\}}=y_{\{p\}}) \land (z_A \neq y_A)$.
Since $ A \subseteq \mathcal N$, $y_{\{p\}}$ strictly subsumes $y_{\mathcal N}$ which is impossible due to Definition~\ref{def:loose}.
\end{proof}

To illustrate this, consider Example~\ref{ex0ter}: the set of constraints is loose
(simply because there is no \emph{strict} subsumption between the features), 
the classifier satisfies FTU in $\mbb{F}[{\mathcal C}]$ and so each 
decision has a fair PI-explanation. It is worth pointing out that 
in Example~\ref{ex0ter}, each decision also has an unfair PI-explanation,
which shows that looseness does not imply universal fairness.
Furthermore, under the very reasonable assumption that 
protected features have domains of size greater than one,
the absence of constraints between protected and unprotected features (as in Theorem~\ref{thm:nocstrs}) implies that the set of constraints is loose.

In order to achieve equivalence between the notions of constrained FTU and existential fairness, 
the looseness condition imposed in Proposition~\ref{prop:COND1FTU-exist} may 
not be satisfied in practice for each instance when diverse constraints exist between features. 
Therefore, we propose the following refinement of the previous proposition, whose proof follows a similar argument, in order to enhance applicability.
Note that, by definition~\ref{def:constrainedFTU}, a classifier $\kappa$ satisfies FTU at $x \in \mbb{F}[{\mathcal C}]$ iff $\forall y \in \mbb{F}[{\mathcal C}]$ such that $x_{\mathcal N} = y_{\mathcal N}$, we have $\kappa(x) = \kappa(y)$.

\begin{proposition}  \label{prop:COND2FTU-exist}
If the set of constraints ${\mathcal C}$ is loose at $x \in \mbb{F}[{\mathcal C}]$, then a classifier $\kappa$ satisfies FTU at x implies that the decision $\langle \kappa, x \rangle$ is existentially fair.
\end{proposition}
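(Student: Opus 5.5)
The plan is to localise the proof of Proposition~\ref{prop:COND1FTU-exist} to the single instance $x$, arguing by contradiction. Suppose $\kappa$ satisfies FTU at $x$ but $\langle \kappa, x\rangle$ has no fair PI-explanation.

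The first step is to show that ${\mathcal N}$ is a weak AXp of $\langle \kappa, x\rangle$. FTU at $x$ says exactly that $\kappa(y)=\kappa(x)$ for every $y \in \mbb{F}[{\mathcal C}]$ with $y_{\mathcal N}=x_{\mathcal N}$, which is Definition~\ref{def:AXp} applied to ${\mathcal N}$. Removing features one at a time while the weak-AXp property persists yields an AXp $A \subseteq {\mathcal N}$, so $A$ is fair. Since $A$ is not a PI-explanation by assumption, some AXp $B$ strictly subsumes $A$. If $B$ were fair, I would iterate, as discussed below, so the goal of this step is to obtain an unfair AXp $B$ strictly subsuming $A$.

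The second step transfers strict subsumption from $B$ to a single protected literal and then from $A$ to ${\mathcal N}$. Pick $p \in B \cap {\mathcal P}$.
\begin{itemize}
\item From $x_A \Rightarrow x_B$ we get $x_A \Rightarrow x_{\{p\}}$.
\item There is $z \in \mbb{F}[{\mathcal C}]$ with $z_B=x_B$ and $z_A\neq x_A$. This $z$ satisfies $z_{\{p\}}=x_{\{p\}}$ and $z_{\mathcal N}\neq x_{\mathcal N}$, because $A\subseteq{\mathcal N}$.
\item Since $x_{\mathcal N} \Rightarrow x_A \Rightarrow x_{\{p\}}$, the literal $x_{\{p\}}$ subsumes $x_{\mathcal N}$. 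The witness $z$ shows that $x_{\mathcal N}$ does not subsume $x_{\{p\}}$.
\end{itemize}
Hence $x_{\{p\}}$ strictly subsumes $x_{\mathcal N}$, contradicting looseness at $x$ (Definition~\ref{def:loose}). Note that only looseness at $x$ itself is used, because every subsumption statement concerns literals of $x$.

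The main obstacle is the step glossed in the proof of Proposition~\ref{prop:COND1FTU-exist}: justifying that some \emph{unfair} AXp strictly subsumes a fair one. I would handle it by finiteness. Among the fair AXps of $\langle\kappa,x\rangle$, choose $A$ whose coverage is maximal with respect to inclusion; this is possible since there are finitely many AXps. Because $A$ is not a PI-explanation, some AXp $B$ has coverage strictly containing that of $A$. By maximality of $A$, $B$ cannot be fair, so $B$ is unfair, as required. With this choice the argument of the previous paragraph closes the proof.
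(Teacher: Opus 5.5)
Your proof is correct and follows the paper's approach: it is the localisation to $x$ of the argument for Proposition~\ref{prop:COND1FTU-exist}, which the paper invokes for this proposition. FTU at $x$ makes ${\mathcal N}$ a weak AXp, a fair AXp $A\subseteq{\mathcal N}$ is strictly subsumed by an unfair AXp $B$, and any $p\in B\cap{\mathcal P}$ makes $x_{\{p\}}$ strictly subsume $x_{\mathcal N}$, contradicting looseness at $x$. Choosing a fair AXp of inclusion-maximal coverage cleanly justifies the existence of the unfair $B$, which the paper leaves implicit (it also follows by following strict subsumptions upward to a PI-explanation and using transitivity).
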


In other words, the feature space can be partitioned into two sets. For instances where $\mathcal{C}$ is loose, 
constrained FTU implies existential fairness (whereas for instances where $\mathcal{C}$ is not loose, the existence of a fair PI-explanation must be assessed). 
In Example~\ref{exFTUnotfair}, the set of constraints is loose at $x=(0,0,0,0)$. We can conclude, without additional verification, that $\langle \kappa, x \rangle$ has a fair PI-explanation. Specifically, since $\kappa$ satisfies constrained FTU, $\kappa(x)=0$ has a fair AXp $x_{\{a,b\}}$, which is not strictly subsumed by $x_{\{f\}}$. In contrast, the set of constraints is not loose at $y=(1,1,1,1)$ because $y_{\{a,b,p\}}$ is strictly subsumed by 
$y_{\{f\}}$. Hence, we need to check the existence of a fair PI-explanation.  $\kappa(y)=0$ has two AXps: $\{a,b,p\}$ and $\{f,p\}$. As the latter strictly subsumes the former, the only PI-explanation is $\{f,p\}$ which is unfair.

%%%%%%%%%%%%%%%%%%%%%%%%%%%%%%%%%%%%%%%%%%%%%%%%%%%%%%%%%%%%%%%%%%%%%%%%%%%%%%%%%%%%%%%%%%
\subsection{Disentangled classifiers and constraints}
We now provide a condition, involving both 
$\kappa$ and ${\mathcal C}$, that guarantees the existential fairness
of a classifier but is computationally simpler to verify (as will be shown in Section~\ref{sec:7}).
Example~\ref{ex1} illustrates the following definition and proposition.

\begin{definition} \label{def:disentangled}
Given an instance $x \in \mathbb{F}[{\mathcal C}]$,
the decision $\kappa(x)=c$ is \emph{disentangled} (with respect to
constraints ${\mathcal C}$) if the set ${\mathcal N}$ 
of unprotected features is a weak
AXp of decision $\kappa(x)=c$ that is not strictly subsumed by an unfair
weak AXp. A classifier $\kappa$ is \emph{disentangled} in $\mathbb{F}[{\mathcal C}]$
if all of its decisions (for $x \in \mathbb{F}[{\mathcal C}]$) are disentangled.
\end{definition}

\begin{proposition} \label{prop:disentangled}
If a classifier $\kappa$ is disentangled in $\mathbb{F}[{\mathcal C}]$, then $\kappa$ is existentially fair.
\end{proposition}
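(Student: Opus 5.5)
Fix an arbitrary $x \in \mathbb{F}[{\mathcal C}]$ and write $c=\kappa(x)$. The aim is to show that $\langle \kappa, x\rangle$ has a fair PI-explanation, which gives existential fairness in the sense of Definition~\ref{def:EUfairness}. Since the decision is disentangled, ${\mathcal N}$ is a weak AXp, so it contains some AXp $A \subseteq {\mathcal N}$. If $A$ is a PI-explanation, we are done. The plan is therefore to start from a fair AXp and move upward along the strict-subsumption order, showing that we never leave the fair AXps, until we reach a PI-explanation.

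To make this concrete, first note that the feature space is finite in the relevant sense: there are finitely many subsets of ${\mathcal F}$, hence finitely many AXps. Strict subsumption between AXps of the same decision is a strict partial order on them, since it is strict inclusion of coverages. So among the fair AXps (those contained in ${\mathcal N}$, a nonempty collection) we can pick one, $A$, whose coverage is maximal among fair AXps. We then claim $A$ is a PI-explanation. If it were not, some AXp $B$ would strictly subsume $A$, and by maximality $B$ would have to be unfair, i.e.\ $B \cap {\mathcal P} \neq \emptyset$.

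The key step is to derive a contradiction with disentanglement. Since $A \subseteq {\mathcal N}$, every $y$ with $y_{\mathcal N}=x_{\mathcal N}$ satisfies $y_A=x_A$. So the coverage of ${\mathcal N}$ is contained in the coverage of $A$, which is strictly contained in the coverage of $B$. Hence $B$ subsumes ${\mathcal N}$, and ${\mathcal N}$ does not subsume $B$: some $z$ in the coverage of $B$ lies outside the coverage of $A$, and therefore outside the coverage of ${\mathcal N}$. Thus $B$, which is an unfair weak AXp (every AXp is a weak AXp), strictly subsumes ${\mathcal N}$. This contradicts Definition~\ref{def:disentangled}. Hence $A$ is a fair PI-explanation.

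The main obstacle is making sure that maximality is taken over the right set and that transitivity of strict subsumption through coverage inclusion is handled carefully. Transitivity is immediate once subsumption is phrased as coverage inclusion, as the paper notes in Section~\ref{sec:3}. One subtlety is that we must not require $B$ to be maximal among all AXps; it suffices that some unfair AXp strictly subsumes $A$. Another is that the disentanglement condition speaks of weak AXps, which is a weaker requirement on $B$ than being an AXp, so the contradiction goes through directly.
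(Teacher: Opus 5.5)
Your proof is correct and takes essentially the same approach as the paper: disentanglement gives a fair AXp inside ${\mathcal N}$, and the coverage chain shows that any unfair AXp strictly subsuming it would also strictly subsume ${\mathcal N}$. Choosing a fair AXp with inclusion-maximal coverage also makes explicit a step the paper's short proof leaves implicit, namely that the chosen fair AXp is not strictly subsumed by another \emph{fair} AXp either.
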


\begin{proof}
We need to show that a classifier $\kappa$ disentangled in $\mathbb{F}[{\mathcal C}]$
also satisfies existential fairness. Being disentangled implies that for each decision,
${\mathcal N}$ is a weak AXp which is not strictly subsumed by an unfair AXp.
Thus each decision must have an AXp $S \subseteq {\mathcal N}$ which is not
strictly subsumed by an unfair AXp. This $S$ is a fair PI-explanation.
\end{proof}

%%%%%%%%%%%%%%%%%%%%%%%%%%%%%%%%%%%%%%%%%%%%%%%%%%%%%%%%%%%%%%%%%%%%%%%%%%%%%%%%%%%%%%%%%%

\section{Computational Complexity of Testing Fairness} \label{sec:7}

In this section we investigate where testing each different notion 
of fairness lies in the polynomial hierarchy~\cite{AroraBarak}.
One way of defining the polynomial hierarchy is by giving definitions of complete
problems at each level $k$.
Recall that a complete problem for $\Sigma_{k}^{\mathrm{P}}$ is satisfiability for quantified 
boolean formulas with $k-1$ alternations of quantifiers, the first quantifier being $\exists$.
The variant of this problem in which the first quantifier is $\forall$ is
complete for $\Pi_{k}^{\mathrm{P}}$. Since $\Sigma_{1}^{\mathrm{P}}$ is NP and 
 $\Pi_{1}^{\mathrm{P}}$ is coNP, problems in these classes can be solved by SAT solvers.

We first consider the problem of determining the existence of
a fair PI-explanation of a decision. We assume throughout this section that $\kappa$ is a known
function and that the computation of $\kappa(z)$ for $z \in \mbb{F}[{\mathcal C}]$
can be achieved in polynomial time.

The existence of a fair PI-explanation for $\kappa(x)=c$ is equivalent to
\begin{align*}
\exists S \subseteq \mathcal F, \
 \forall y \in \mbb{F}[{\mathcal C}], \ \forall T \subseteq {\mathcal F}, \ 
\exists z \in \mbb{F}[{\mathcal C}] \ ( \ \\
(S \cap {\mathcal P} = \emptyset) \land (y_S=x_S  \rightarrow  \kappa(y)=c) \\  \land \
( \ (T \cap {\mathcal P} = \emptyset) \ \lor \ (z_T = x_T \ 
\land \ \kappa(z) \neq c) \\ \lor \
(z_S=x_S \land z_T \neq x_T) \ \lor \ (y_T=x_T \rightarrow y_S=x_S)
\ ) \ )
\end{align*}
(i.e. there exists a fair weak AXp $S$ which is not strictly subsumed by
an unfair weak AXp $T$). 
Hence, the problem of determining the existence of a fair PI-explanation
of a decision belongs to $\Sigma_3^{\text{P}}$.

Similarly, the existence of an \emph{unfair} PI-explanation for $\kappa(x)=c$ is equivalent to
\begin{align*}
\exists T \subseteq \mathcal F, \
 \forall y \in \mbb{F}[{\mathcal C}], \ \forall S \subseteq {\mathcal F}, \ 
\exists z \in \mbb{F}[{\mathcal C}] \ ( \ \\
(T \cap {\mathcal P} \neq \emptyset) \land (y_T=x_T  \rightarrow  \kappa(y)=c) \\  \land \
( \ (S \cap {\mathcal P} \neq \emptyset) \ \lor \ (z_S = x_S \ 
\land \ \kappa(z) \neq c) \ \lor \
(z_T=x_T \\ \land z_S \neq x_S) \ \lor \ (y_S=x_S \rightarrow y_T=x_T \land  \neg(S \subseteq T)  )
\ ) \ )
\end{align*}
(i.e. there exists an unfair weak AXp $T$ which is not strictly subsumed
by a fair weak AXp $S$). 
It follows that the problem of determining the existence of an unfair PI-explanation
of a decision also belongs to $\Sigma_3^{\text{P}}$.          

Although it is an open question whether the problems of determining the existence
of a fair/unfair PI-explanation are complete for $\Sigma_3^{\text{P}}$,
we can nevertheless state upper bounds on the complexity of testing existential/universal
fairness of decisions.

\begin{proposition}  \label{prop:cpxy1}
The problem of testing whether a decision is existentially fair belongs to
$\Sigma_3^{\text{P}}$. The problem of testing whether a decision is universally
fair belongs to $\Pi_3^{\text{P}}$.
\end{proposition}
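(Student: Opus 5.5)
The plan is to reduce both claims to the two quantified formulas displayed just before the statement. A decision $\kappa(x)=c$ is existentially fair exactly when it has a fair PI-explanation. The first formula expresses precisely this: there is a fair weak AXp $S$ that is not strictly subsumed by any unfair weak AXp $T$. So the first claim follows once two things are checked: that this formula is correct, and that it has the quantifier shape $\exists\forall\exists$ with a polynomial-time matrix.

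For universal fairness, note that a decision is universally fair iff it has no unfair PI-explanation. That is the negation of the second displayed formula, which has shape $\exists\forall\exists$. Its negation is therefore a $\forall\exists\forall$ statement, placing the problem in $\Pi_3^{\text{P}}$.

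\textbf{Correctness of the first formula.} Suppose $S$ is a fair weak AXp not strictly subsumed by any unfair weak AXp. Take a subset-minimal weak AXp $S'\subseteq S$; it is fair.
\begin{itemize}
\item If $S'$ were strictly subsumed by some AXp $B$, then $B$ must be unfair. Otherwise $B$ strictly subsumes $S'$, and $S$ as well, since the coverage of $S$ is contained in that of $S'$.
\item An unfair $B$ strictly subsuming $S'$ would subsume $S$. Strictness is then the delicate point, because the coverage of $S$ may be strictly smaller than that of $S'$, which is in turn strictly smaller than that of $B$.
\end{itemize}
The cleanest fix is to choose $S$ directly among fair AXps whose coverage is maximal, where the coverage of $S$ is its set of covered instances. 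Then I would argue the converse with the same choice of $S$.

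Concretely: if there is a fair PI-explanation $S$, it witnesses the formula. Any unfair weak AXp $T$ strictly subsuming $S$ contains an AXp $T'\subseteq T$ whose coverage contains that of $T$, so $T'$ strictly subsumes $S$. This $T'$ is either unfair, contradicting that $S$ is a PI-explanation, or fair; in the fair case we still get an AXp strictly subsuming $S$, again a contradiction.

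Conversely, suppose a fair weak AXp $S$ is not strictly subsumed by any unfair weak AXp. Among fair AXps, pick one, $S^*$, whose coverage is inclusion-maximal and contains the coverage of $S$; such an $S^*$ exists, starting from a minimal subset of $S$ and using finiteness. Any AXp strictly subsuming $S^*$ must be unfair, by maximality. It would then also strictly subsume $S$, because the coverage of $S$ lies inside that of $S^*$, which is strictly inside the coverage of $B$. So $S^*$ is a fair PI-explanation.

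This equivalence argument is the main obstacle. The paper's formula quantifies over weak AXps rather than AXps, so one must verify that strict subsumption transfers correctly between the two. The second formula (with its extra $\neg(S\subseteq T)$ clause) needs the analogous check for universal fairness.

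\textbf{Complexity placement.} This part is routine.
\begin{itemize}
\item The quantified objects $S,T\subseteq\mathcal F$ and $y,z\in\mbb{F}[\mathcal C]$ have polynomial-size encodings.
\item Membership in $\mbb{F}[\mathcal C]$ is checkable in polynomial time, assuming constraints are given explicitly.
\item $\kappa$ is polynomial-time computable, by the standing assumption.
\end{itemize}
Hence the matrix is polynomial-time decidable. Merging adjacent like quantifiers gives an $\exists\forall\exists$ prefix, so the problem is in $\Sigma_3^{\text{P}}$. Complementing the second formula yields $\Pi_3^{\text{P}}$ for universal fairness.
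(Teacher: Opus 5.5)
Your proposal is correct and follows the paper's route. Existential fairness is read off as the $\exists\forall\exists$ shape of the first displayed formula, and universal fairness is obtained by complementing the second formula (existence of an unfair PI-explanation); your coverage-maximal choice of $S^*$ correctly proves the equivalence between \emph{a fair weak AXp not strictly subsumed by an unfair weak AXp} and \emph{a fair PI-explanation}, which the paper asserts without proof. The check you only flag for the second formula also goes through with a coverage-maximal unfair AXp $T^*$, with the $\neg(S\subseteq T)$ clause being exactly what prevents the minimal weak AXp inside $T$ from being fair.
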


\begin{proof}
This follows from the discussion above, together with the fact that testing
universal fairness is the complement of testing the existence of an unfair PI-explanation.
\end{proof}

We now study the complexity of testing fairness of a classifier $\kappa$.

\begin{proposition}
The problem of testing existential fairness of a classifier belongs to
$\Pi_4^{\text{P}}$. The problem of testing universal
fairness of a classifier belongs to $\Pi_3^{\text{P}}$.
\end{proposition}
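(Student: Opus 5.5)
We obtain both bounds by prefixing a universal quantifier over instances to the formulas that the discussion before Proposition~\ref{prop:cpxy1} gives for single decisions, and then absorbing that quantifier into an adjacent block of the same type. Membership of $x$ in $\mbb{F}[{\mathcal C}]$ and the value $\kappa(x)$ are checked in polynomial time, as assumed at the start of this section. Quantifier blocks of the same type merge, which is why $\forall x\,\forall y\,\forall T$ counts as a single universal block.

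\emph{Existential fairness.} By Definition~\ref{def:EUfairness}, $\kappa$ is existentially fair iff for every $x\in\mbb{F}[{\mathcal C}]$ the decision $\kappa(x)=c$, with $c=\kappa(x)$, has a fair PI-explanation. We take the $\Sigma_3^{\mathrm P}$ formula $\exists S\,\forall y\,\forall T\,\exists z\,(\ldots)$ given above and prefix it with $\forall x$, adding the conjunct $x\in\mbb{F}[{\mathcal C}]$ as a guard (written as an implication). The result has prefix $\forall\exists\forall\exists$, which is a $\Pi_4^{\mathrm P}$ form. The quantifier $\forall x$ cannot be merged into the first block, because that block is $\exists S$. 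We therefore only claim $\Pi_4^{\mathrm P}$.

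\emph{Universal fairness.} By Definition~\ref{def:EUfairness}, $\kappa$ is universally fair iff no $x\in\mbb{F}[{\mathcal C}]$ has an unfair PI-explanation. An unfair PI-explanation is an unfair weak AXp that is subset-minimal and is not strictly subsumed by any other AXp. The natural route is to negate $\exists x\,\exists T\,\forall y\,\forall S\,\exists z$, which would give $\forall\forall\exists\exists\forall = \forall\exists\forall$, placing the problem in $\Pi_3^{\mathrm P}$ directly.

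The main obstacle is justifying that the formula from the discussion correctly captures "unfair PI-explanation". In particular, subset-minimality of $T$ must be enforced: replacing $T$ by a weak AXp that is not strictly subsumed by a fair weak AXp must not change the answer. I would argue this as follows. If an unfair weak AXp $T$ is not strictly subsumed by any fair weak AXp, shrink $T$ to an AXp $T_0\subseteq T$. If $T_0$ is unfair, then, taking a PI-explanation that subsumes $T_0$ (at least one exists, by finiteness and transitivity of strict subsumption), either that PI-explanation is unfair or it is a fair AXp strictly subsuming $T$, which contradicts the hypothesis on $T$. The case where $T_0$ is fair needs the $\neg(S\subseteq T)$ clause from the formula, which rules out fair subsets of $T$.

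If this equivalence check becomes delicate, I would use a fallback. Universal fairness of $\kappa$ is equivalent to the statement that for all $x$ and all unfair AXps $T$, $T$ is strictly subsumed by some AXp. Writing out "$T$ is an AXp" introduces only universal quantification over $y$ and over the single-feature removals of $T$, and these removals are polynomially many. "Strictly subsumed by some AXp" introduces $\exists S$ followed by a universal check. The resulting prefix is $\forall x\,\forall T\,\exists y'\,\exists S\,\forall(\ldots)$, which is again $\Pi_3^{\mathrm P}$.
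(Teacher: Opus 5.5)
Your proposal is correct and matches the paper's proof: prefixing $\forall x$ to the $\Sigma_3^{\mathrm P}$ single-decision test gives $\Pi_4^{\mathrm P}$ for existential fairness, and for universal fairness the new $\forall x$ merges into the leading universal block of the $\Pi_3^{\mathrm P}$ test (the complement of the unfair-PI-explanation test). Your extra check of that formula and your fallback via single-feature removals go beyond the paper, which simply invokes Proposition~\ref{prop:cpxy1}. In that check, take the PI-explanation to be $T_0$ itself or one that \emph{strictly} subsumes $T_0$, since a fair one with the same coverage as $T_0$ would not contradict your hypothesis on $T$. Also note that the converse direction (an unfair PI-explanation satisfies the formula) needs the same shrinking step, applied to a fair $S$ to obtain an AXp.
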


\begin{proof}
Testing existential fairness involves testing whether 
for all $x \in \mbb{F}[{\mathcal C}]$, the decision $\langle \kappa,x \rangle$
is existentially fair. Since this latter problem belongs to
$\Sigma_3^{\text{P}}$ (by Proposition~\ref{prop:cpxy1}), we can deduce
that testing existential fairness of a classifier belongs to
$\Pi_4^{\text{P}}$. 

On the other hand, since the first quantifier in testing universal
fairness of a decision is already a universal quantifier, adding 
an extra universal quantification over $x \in \mbb{F}[{\mathcal C}]$
does not change the complexity class. Thus, the  problem of testing universal
fairness of a classifier belongs to the same class, namely $\Pi_3^{\text{P}}$
(by Proposition~\ref{prop:cpxy1}), as testing universal fairness of
a single decision.
\end{proof}

On a positive note, we can observe that testing constrained FTU is 
accessible to SAT solvers, as shown by the following proposition.

\begin{proposition} \label{prop:coNPFTU}
Testing constrained FTU for a classifier is coNP-complete.
\end{proposition}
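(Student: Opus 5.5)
Membership in coNP and coNP-hardness are handled separately.

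\emph{Membership.} The complement of constrained FTU is: there exist $x,y \in \mbb{F}$ with $x,y \in \mbb{F}[{\mathcal C}]$, $x_{\mathcal N} = y_{\mathcal N}$ and $\kappa(x) \neq \kappa(y)$. Such a pair is a certificate of polynomial size, and it can be checked in polynomial time. Checking $x_{\mathcal N}=y_{\mathcal N}$ is immediate, and $\kappa$ is assumed polynomial-time computable on $\mbb{F}[{\mathcal C}]$. Checking membership of an instance in $\mbb{F}[{\mathcal C}]$ is also polynomial, under the standing assumption (implicit in the $\Sigma_3^{\text{P}}$ formulas above, where quantifying over $y \in \mbb{F}[{\mathcal C}]$ is treated as polynomially checkable) that the constraints are given explicitly, for example as a CNF or a list of polynomial-time-checkable relations. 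Hence unfairness in the FTU sense is in NP, so constrained FTU is in coNP. Equivalently, the condition is a single universal quantification over pairs $(x,y)$ of a polynomial-time predicate, which is exactly what a SAT solver can refute.

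\emph{Hardness.} I would reduce from UNSAT (or TAUTOLOGY), and I would take ${\mathcal C}=\emptyset$ so that the result holds even without constraints. Given a CNF $\phi$ over variables $v_1,\ldots,v_m$, treat these as unprotected boolean features and add one protected boolean feature $p$. Define $\kappa(p,v) = p \land \phi(v)$, which is computable in polynomial time. If $\phi$ is unsatisfiable, then $\kappa \equiv 0$ and FTU holds. If $\phi(v)=1$ for some $v$, then $\kappa(1,v)=1\neq 0=\kappa(0,v)$, and these two instances agree on ${\mathcal N}$, so FTU fails. Thus $\kappa$ satisfies constrained FTU iff $\phi$ is unsatisfiable, which gives coNP-hardness.

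The main obstacle is not technical but definitional: making precise the input model so that both directions are sound. Membership needs constraint checking to be polynomial. Hardness needs $\kappa$ to be given in some representation (for example a boolean circuit or formula) for which the reduction above is polynomial. I would state these assumptions explicitly, consistent with the section's standing assumption that $\kappa(z)$ is polynomial-time computable. I would also note that hardness already holds with ${\mathcal C}=\emptyset$, so the difficulty is not caused by the constraints.
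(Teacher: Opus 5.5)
Your proof is correct and follows the paper's argument. Membership uses the same certificate: a pair $x,y \in \mathbb{F}[{\mathcal C}]$ agreeing on ${\mathcal N}$ with $\kappa(x)\neq\kappa(y)$. Hardness uses the same kind of unconstrained reduction with a single protected feature; the only difference is that the paper reduces from DNF tautology via $\kappa = x_0 \lor \Phi$, whereas you use the dual reduction from CNF unsatisfiability via $\kappa = p \land \phi$.
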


\begin{proof}
To show that a classifier $\kappa$ fails to satisfy constrained FTU, it suffices to 
exhibit $x,y \in \mbb{F}[{\mathcal C}]$ which agree on the unprotected features
but such that $\kappa(x) \neq \kappa(y)$. Thus testing constrained FTU belongs
to coNP. 

To show completeness, let $\Phi(x_1,\ldots,x_n)$ be a DNF 
(a boolean formula in disjunctive normal form) and define 
$\kappa : \{0,1\}^{n+1} \rightarrow \{0,1\}$ by 
$\kappa(x_0,x_1,\ldots,x_n) = \Phi(x_1,\ldots,x_n)$ if $x_0=0$
and $\kappa(x_0,x_1,\ldots,x_n) = 1$ if $x_0=1$. There are no constraints,
so $\mbb{F}[{\mathcal C}]=\mbb{F}$. If we consider
$x_0$ as the only protected feature, then $\kappa$ satisfies constrained FTU
iff $\Phi$ is a tautology. It is well known that testing whether a DNF is
a tautology is coNP-complete~\cite{AroraBarak}, which completes the proof.
\end{proof}

We can state the following corollary of Theorem~\ref{thm:nocstrs},
Proposition~\ref{prop:COND1FTU-exist} and Proposition~\ref{prop:coNPFTU}.

\begin{corollary}
When there are no constraints in ${\mathcal C}$ 
between protected and unprotected features,
testing universal fairness of a classifier is coNP-complete.
%When a classifier is disentangled in $\mathbb{F}[{\mathcal C}]$,
When the set of constraints ${\mathcal C}$ is loose,
testing existential fairness of a classifier is also coNP-complete.
\end{corollary}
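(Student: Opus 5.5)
The plan is to reduce both claims to Proposition~\ref{prop:coNPFTU} by showing that, in each setting, the fairness notion in question coincides with constrained FTU. Membership and hardness are then handled separately.

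\emph{Membership.} When there are no constraints between ${\mathcal P}$ and ${\mathcal N}$, Theorem~\ref{thm:nocstrs} gives that universal fairness of $\kappa$ is equivalent to FTU in $\mbb{F}[{\mathcal C}]$. When ${\mathcal C}$ is loose, Proposition~\ref{prop:COND1FTU-exist} gives that constrained FTU implies existential fairness. The converse implication (existential fairness implies constrained FTU) holds in general: a fair PI-explanation $S \subseteq {\mathcal N}$ makes ${\mathcal N}$ a weak AXp at every instance, as observed at the start of Section~\ref{sec:6}. So in both settings the property to be tested equals constrained FTU. A counterexample is a pair $x,y \in \mbb{F}[{\mathcal C}]$ with $x_{\mathcal N}=y_{\mathcal N}$ and $\kappa(x)\neq\kappa(y)$, which is checkable in polynomial time given polynomial-time evaluation of $\kappa$ and of constraint satisfaction. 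Hence both problems are in coNP.

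\emph{Hardness.} I would reuse the reduction of Proposition~\ref{prop:coNPFTU} verbatim. From a DNF $\Phi$, build $\kappa(x_0,x_1,\ldots,x_n)$ equal to $\Phi(x_1,\ldots,x_n)$ when $x_0=0$ and to $1$ when $x_0=1$, with protected feature $x_0$ and ${\mathcal C}=\emptyset$. The empty constraint set trivially has no constraints between ${\mathcal P}$ and ${\mathcal N}$. It is also loose: with no constraints and $x_0$ boolean, $x_{\{x_0\}}$ implies nothing about $x_{\mathcal N}$, so it cannot strictly subsume $x_{\mathcal N}$. By the equivalences above, $\kappa$ is universally fair, and likewise existentially fair, iff it satisfies constrained FTU, which holds iff $\Phi$ is a tautology. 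This establishes coNP-hardness for both problems.

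The main obstacle is being careful on two points. First, membership for existential fairness needs both directions of the equivalence with FTU, and only one direction is stated in Proposition~\ref{prop:COND1FTU-exist}. Second, looseness of the empty constraint set depends on the protected domain having size at least two, which is true here since $x_0$ is boolean.
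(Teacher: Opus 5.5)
Your proposal is correct and follows the paper's route. The paper states the result as a direct corollary of Theorem~\ref{thm:nocstrs}, Proposition~\ref{prop:COND1FTU-exist} and Proposition~\ref{prop:coNPFTU}; you make explicit what it leaves implicit, namely the converse implication from existential fairness to constrained FTU (noted at the start of Section~\ref{sec:6}) and the check that the hardness instance with ${\mathcal C}=\emptyset$ satisfies both hypotheses.

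One small wording slip: ${\mathcal C}=\emptyset$ is loose because $x_{\mathcal N}$ does not imply $x_{\{x_0\}}$ (the boolean $x_0$ can be flipped with ${\mathcal N}$ fixed), so $\{x_0\}$ does not even subsume ${\mathcal N}$. By contrast, ``$x_{\{x_0\}}$ implies nothing about $x_{\mathcal N}$'' is the other direction and by itself would not rule out strict subsumption.
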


Using the notation ${\mathcal C}(x)$ to represent that the instance 
$x \in \mathbb{F}$ satisfies the constraints ${\mathcal C}$, testing 
whether a classifier $\kappa : \mathbb{F}[{\mathcal C}] \rightarrow {\mathcal K}$
satisfies constrained FTU amounts to verifying
\begin{align*} \neg ( \exists x,y \in \mathbb{F}, \exists c_0,c_1 \in {\mathcal K}
\text{ s.t. } {\mathcal C}(x)
\land {\mathcal C}(y) \ \land \\ x_{\mathcal N} = y_{\mathcal N}
\land \kappa(x)=c_0 \land \kappa(y)=c_1 \land c_0 \neq c_1)
\end{align*}
From a practical point of view, we can determine whether a
classifier satisfies constrained FTU by a single call to a
SAT solver provided both ${\mathcal C}(x)$ and 
$\kappa(x)=c_0$ can be coded as boolean formulas. The SAT encoding of
many families of machine-learning models, such as Random Forests~\cite{IzzaRF21}
or Decision Lists~\cite{IgnatievS21}, has been studied in recent work on formal XAI~\cite{Joao22}. On the other hand, checking the existential/universal fairness of a classifier, using the formulas outlined at the start of this section, can be performed with a QBF (Quantified Boolean Formula) solver.

We conclude by studying the complexity of testing looseness and disentangledness.
The former condition on the set of constraints ${\mathcal C}$ guarantees that existential fairness is equivalent to constrained FTU for all classifiers (by Proposition~\ref{prop:COND1FTU-exist}), whereas the latter, involving both $\kappa$ and ${\mathcal C}$, implies existential fairness (by Proposition~\ref{prop:disentangled}).

\begin{proposition}
Testing looseness or disentangledness belongs to $\Pi^{\text{P}}_2$.
\end{proposition}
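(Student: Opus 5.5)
Both conditions can be written as formulas whose quantifier prefix is $\forall\exists$ over polynomial-size objects. A matrix that can be checked in polynomial time then places the problem in $\Pi^{\text{P}}_2$. Throughout, we use the standing assumptions of Section~\ref{sec:7}: evaluating $\kappa(z)$ takes polynomial time, and checking whether $z \in \mbb{F}[{\mathcal C}]$ takes polynomial time.

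\textbf{Looseness.} We first unfold Definition~\ref{def:loose}. The constraints are \emph{not} loose iff some $x \in \mbb{F}[{\mathcal C}]$ and some $p \in {\mathcal P}$ satisfy two conditions:
\begin{itemize}
\item[(i)] $x_{\{p\}}$ subsumes $x_{\mathcal N}$, that is, $\forall y \in \mbb{F}[{\mathcal C}]$, $y_{\mathcal N}=x_{\mathcal N} \rightarrow y_p=x_p$;
\item[(ii)] $x_{\mathcal N}$ does not subsume $x_{\{p\}}$, that is, $\exists z \in \mbb{F}[{\mathcal C}]$ with $z_p=x_p$ and $z_{\mathcal N}\neq x_{\mathcal N}$.
\end{itemize}
Looseness is the negation of this, namely
\[
\forall x \in \mbb{F}[{\mathcal C}]\ \forall p \in {\mathcal P}\ \forall z \in \mbb{F}[{\mathcal C}]\ \exists y \in \mbb{F}[{\mathcal C}]\ \big( (z_p=x_p \rightarrow z_{\mathcal N}=x_{\mathcal N}) \lor (y_{\mathcal N}=x_{\mathcal N} \land y_p\neq x_p) \big).
\]
The universal variables $x,p,z$ are merged into a single universal block, and the existential $y$ follows. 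This gives a $\forall\exists$ formula with a polynomial-time matrix, so testing looseness lies in $\Pi^{\text{P}}_2$.

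\textbf{Disentangledness.} Here we unfold Definition~\ref{def:disentangled} for every $x \in \mbb{F}[{\mathcal C}]$. Two conditions must hold:
\begin{itemize}
\item[(a)] ${\mathcal N}$ is a weak AXp: $\forall y \in \mbb{F}[{\mathcal C}]$, $y_{\mathcal N}=x_{\mathcal N}\rightarrow \kappa(y)=\kappa(x)$. This is purely universal.
\item[(b)] For every $T \subseteq {\mathcal F}$ with $T\cap{\mathcal P}\neq\emptyset$, either $T$ is not a weak AXp, or $T$ does not strictly subsume ${\mathcal N}$.
\end{itemize}
In (b), \emph{not a weak AXp} means $\exists z \in \mbb{F}[{\mathcal C}]$ with $z_T=x_T$ and $\kappa(z)\neq\kappa(x)$. \emph{Does not strictly subsume} means either $\exists z$ with $z_{\mathcal N}=x_{\mathcal N}$ and $z_T\neq x_T$ (so $T$ does not subsume ${\mathcal N}$), or $\forall w$, $w_T=x_T\rightarrow w_{\mathcal N}=x_{\mathcal N}$ (so ${\mathcal N}$ subsumes $T$).

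This last universal subformula sits inside an existential one, which looks like it might cost a third alternation. This is the main point to handle with care. The fix is to lift $w$ to the outer universal block, using the equivalence
\[
\forall T\,\big(\exists z\,\alpha(T,z) \lor \forall w\,\beta(T,w)\big) \;\equiv\; \forall T\,\forall w\,\exists z\,\big(\alpha(T,z)\lor\beta(T,w)\big).
\]
This holds because $w$ does not occur in $\alpha$ and $z$ does not occur in $\beta$.

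We then collect all the universal variables $x, y, T, w$ into one block and all the existential variables (the two witnesses $z, z'$ for the two disjuncts) into one block. The result is a formula of the form $\forall x,y,T,w\ \exists z,z'\ \varphi$, where $\varphi$ needs only polynomially many evaluations of $\kappa$ and of the constraints. Hence testing disentangledness also lies in $\Pi^{\text{P}}_2$.
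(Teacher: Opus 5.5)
Your proof is correct and follows essentially the same route as the paper's: you unfold both definitions into $\forall\exists$ formulas with polynomial-time matrices, and for disentangledness you pull the inner universal quantifier (your $w$, which the paper handles by reusing $y$) into the outer universal block. The only differences are cosmetic. You use two existential witnesses $z,z'$ where the paper reuses a single $z$ for the disjunction, and you justify the prenexing step explicitly where the paper leaves it implicit.
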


\begin{proof}
A set of constraints ${\mathcal C}$ is \emph{loose} if for all 
$x \in \mathbb{F}[{\mathcal C}]$, for all $p \in {\mathcal P}$, 
$x_{\{p\}}$ does not strictly subsume $x_{\mathcal N}$ (i.e.
coverage($x_{\{p\}}$) $=$ coverage($x_{\mathcal N}$) or
coverage($x_{\mathcal N}$)$\setminus$coverage($x_{\{p\}}$) $\neq \emptyset$).
This is logically equivalent to the following, which shows that testing looseness belongs to $\Pi^{\text{P}}_2$.
\begin{align*}
\forall x,y \in \mathbb{F}[{\mathcal C}], \ \forall p \in {\mathcal P}, 
\exists z \in \mathbb{F}[{\mathcal C}] \ ( \ \\ (y_{\{p\}} \neq x_{\{p\}} \lor y_{{\mathcal N}}=x_{{\mathcal N}}) \lor (z_{{\mathcal N}}=x_{{\mathcal N}} \land 
z_{\{p\}} \neq x_{\{p\}}) \ )
\end{align*}

A classifier $\kappa$ is
disentangled wrt ${\mathcal C}$ if for all 
$x \in \mathbb{F}[{\mathcal C}]$, ${\mathcal N}$ 
is a weak AXp of %the decision 
$\kappa(x)=c$ that is not strictly 
subsumed by an unfair weak AXp $Q$. This is logically equivalent to
\begin{align*}
\forall x,y \in \mathbb{F}[{\mathcal C}], \ \forall Q \subseteq {\mathcal F}, \ \exists z \in \mathbb{F}[{\mathcal C}] \ ( \ \\
(y_{{\mathcal N}} \neq x_{{\mathcal N}} \lor \kappa(y) = \kappa(x))
\land \\ ((Q \cap P = \emptyset) \ \lor 
(z_Q=x_Q  \land \kappa(z) \neq \kappa(x)) \ \lor \ \\ (y_Q \neq x_Q \lor y_{\mathcal N}=x_{\mathcal N}) \ \lor \ (z_N=x_N \land z_Q \neq x_Q))
\ )
\end{align*}
Hence, testing disentangledness also belongs to $\Pi^{\text{P}}_2$.
\end{proof}

\section{Choice of protected features and causality}  \label{sec:causality}

An orthogonal question to the definition of fairness is the \emph{choice} of the set
of protected features. For example, if gender and race are
protected features,
it seems natural to extend ${\mathcal P}$ to include features
such as `on maternity leave' or `of the same race as $X$', but
including all features constrained by the protected features
is too strong a rule. For example, to be on maternity leave, a person
has to be employed, but being employed is not constrained by gender,
so the feature `in employment' should not be considered as a
protected feature.
To avoid including too many features in ${\mathcal P}$,
one approach is to use a causal graph.
The definition of counterfactual fairness proposed by Kusner et al.~\cite{KusnerLRS17},
is based on a causal graph, unobserved variables and probabilities. 
We adapt their definition to state a purely logical version, rather than a probabilistic version, and refer to it as FTCI (Fairness Through Causal Independence).

Given a set of variables $V$ such that ${\mathcal F} \subseteq V$,
a directed causal graph $G = \langle V,E \rangle$ is a
directed graph in which $(i,j) \in E$ if feature $i$ is
one of the causes of feature $j$.
%\begin{definition} \label{def:ftci}
A classifier $\kappa$ satisfies FTCI (Fairness Through Causal Independence)
with respect to a directed causal graph $G=\langle V,E \rangle$,
where ${\mathcal F} \subseteq V$,
if $\kappa$ can be expressed as a function of the variables in $V$
which are not reachable from ${\mathcal P} \subseteq V$ in $G$.
This definition considers as protected all features reachable in the causal graph from the original set
${\mathcal P}$ of protected features. Observe that, after the extension of the set of protected features, the definition of FTCI is based
on FTU but could be
adapted to take into account \emph{constraints} by applying a different definition
of fairness, such as constrained FTU or existential/universal fairness (Definitions \ref{def:constrainedFTU} and \ref{def:EUfairness}).

According to FTCI, a feature should be considered protected if it is reachable
in the causal graph from a protected feature. 
In causal reasoning, other concepts have emerged, including resolving variables, which are variables influenced by a protected feature in a manner accepted by practitioners as non-descriminatory~\cite{KilbertusRPHJS17}. Hence, the dependence of a classifier’s decisions on a variable that is reachable in the causal graph from a protected feature is problematic, unless via a resolving variable. Our work does not address this matter as determining what is discriminatory or legally acceptable depends on the specific application domain and the judgment of experts. 
This presents a potential 
orthogonal direction for in-depth investigation.

\section{Discussion} \label{sec:disc}
Assessing fairness, either at the level of a particular decision or for a classifier as a whole, depends on the definition of fairness adopted, as different definitions exist to measure it. Whatever the definition of fairness,
the present study shows that ignoring constraints can distort the truth about fairness.
Most potential for distortion occurs when constraints exist between protected
and unprotected features. As pointed out in Section~\ref{sec:causality}, it is not always possible to avoid this
case by protecting all features constrained by sensitive features.

Logical constraints between features can be viewed as a special case of
statistical correlations between features.
We consider that it is important to understand  
fairness in the presence of constraints before
considering the more general problem of correlations. 
Specifically, our study is grounded on the assumption that the set of constraints are known a priori (obtained either from the encoding process or from established rules within the application domain), upon which we proposed definitions to assess the fairness of classifiers.
A generalization of this work can be achieved by first identifying relationships among dataset features and then applying the findings of this study.

As a measure of fairness, we suggest using the existence of a prime-implicant explanation
that does not contain any protected feature. We consider our definition to be of interest, 
as it reveals the underlying \emph{reasons} for each decision made by the classifier. 
However, the increase in computational complexity raises some concerns. Although the definitions discussed in this paper are reasonable and provide a coherent formalisation of the notion of fairness, from a practical point of view, 
their high complexity
makes them difficult to apply. Addressing this computational complexity issue is a potential avenue for future research.
As a first step in this direction, we have identified less computationally expensive
conditions, including looseness of the set of constraints
which imply that the existential fairness
of a classifier can be solved by a single call to a SAT solver.

Two definitions of fairness are proposed in this paper based on PI-explanations, 
namely existential fairness (all decisions have at least one fair explanation) 
and universal fairness (all explanations of all decisions are fair). 
A natural question is whether existential fairness is sufficient in order 
to deem the classifier fair, or whether the more restrictive universal fairness should be imposed. 
We consider that this choice depends on the application domain. In Example~\ref{exAdoptAgain}, 
the existential fairness of the classifier $\kappa'$ may be regarded insufficient 
if not allowing mixed-race couples to adopt is considered to be discrimination,
especially since a universally-fair classifier $\kappa''$ exists.
However, as we saw in Example~\ref{ex0ter}, the constraints may be so strong that
no non-trivial universally-fair classifier exists.

\section{Conclusion} \label{sec:conc}

Fairness is a fundamental requirement for AI systems, especially as they increasingly influence high-risk domains. In this paper, we addressed the fairness of decisions taken by a classifier. A decision can be deemed fair based on the existence of prime-implicant explanations not involving protected features (fair PI-explanations). The study focused on the importance of taking into account constraints between features when assessing fairness. We examined several definitions of fairness of classifiers
in constrained features spaces, based either on unawareness of protected features or
the existence of fair/unfair PI-explanations for each decision.

This work lays novel theoretical foundations for formal fairness
in the presence of constraints between features. We focused on the conceptual and theoretical analysis with practical insights, providing a basis that can be expanded in multiple ways. The present study opens new avenues for future research, some of which have been highlighted in the discussion section. From a practical standpoint, it would be interesting to %conduct more experiments on real-world datasets, 
consider, alongside a priori knowledge of some strict constraints,  correlations between features, both during training and evaluation of models, with the aim of achieving  formal guarantees of fairness.

\section*{Acknowledgments}
This work was supported by the ForML research project ANR-23-CE25-0009.

\bibliography{refsFairness} 

\end{document}